\UseRawInputEncoding
\pdfoutput=1
\documentclass[11pt]{article}
\usepackage{natbib}
\usepackage[final]{acl}
\usepackage{tcolorbox}
\tcbuselibrary{skins, breakable, theorems}
\usepackage[frozencache,cachedir=minted-cache]{minted}
\usepackage{times}
\usepackage{latexsym}
\usepackage{algorithm,algpseudocode}
\usepackage{amsmath} % 引入 amsmath 宏包，提供数学环境和符号支持
\usepackage{amsfonts} % 引入 amsfonts 宏包，提供更多的数学字体
\usepackage[T1]{fontenc}
\usepackage{amsthm}
\usepackage{booktabs}
\usepackage{pifont}

\usepackage[utf8]{inputenc}

\usepackage{microtype}

\usepackage{inconsolata}

\usepackage{graphicx}
\usepackage{subcaption}

\newtheorem{theorem}{Theorem}[section]
\newtheorem{lemma}[theorem]{Lemma}

\theoremstyle{definition}

\theoremstyle{remark}

\title{Triviality Corrected Endogenous Reward}

\author{
\textbf{Xinda Wang}$^{1}$\thanks{Contributed equally.} \quad
\textbf{Zhengxu Hou}$^{2}$\footnotemark[1] \quad
\textbf{Yangshijie Zhang}$^{3}$ \quad
\textbf{Bingren Yan}$^{2}$ \quad
\textbf{Jialin Liu}$^{1}$ \\
\textbf{Chenzhuo Zhao}$^{1}$ \quad
\textbf{Zhibo Yang}$^{2}$ \quad
\textbf{Bin-Bin Yang}$^{2}$ \quad
\textbf{Feng Xiao}$^{2}$\thanks{Corresponding author.} \\
$^{1}$Peking University \quad
$^{2}$Alibaba Group \quad
$^{3}$Lanzhou University
}

\begin{document}
\maketitle
~\begin{abstract}
Reinforcement learning for open-ended text generation is constrained by the lack of verifiable rewards, necessitating reliance on judge models that require either annotated data or powerful closed-source models. Inspired by recent work on unsupervised reinforcement learning for mathematical reasoning using confidence-based endogenous rewards, we investigate whether this principle can be adapted to open-ended writing tasks. We find that directly applying confidence rewards leads to \textit{Triviality Bias}: the policy collapses toward high-probability outputs, reducing diversity and meaningful content. We propose TCER (Triviality Corrected Endogenous Reward), which addresses this bias by rewarding the relative information gain between a specialist policy and a generalist reference policy, modulated by a probability-dependent correction mechanism. Across multiple writing benchmarks and model architectures, TCER achieves consistent improvements without external supervision. Furthermore, TCER also transfers effectively to mathematical reasoning, validating the generality of our approach across different generation tasks.

\end{abstract}

% modulated by a probability-dependent correction mechanism 这里写的比较奇怪，但没想到好的写法

~\begin{figure}[ht]
    \centering
    \includegraphics[width=0.48\textwidth]{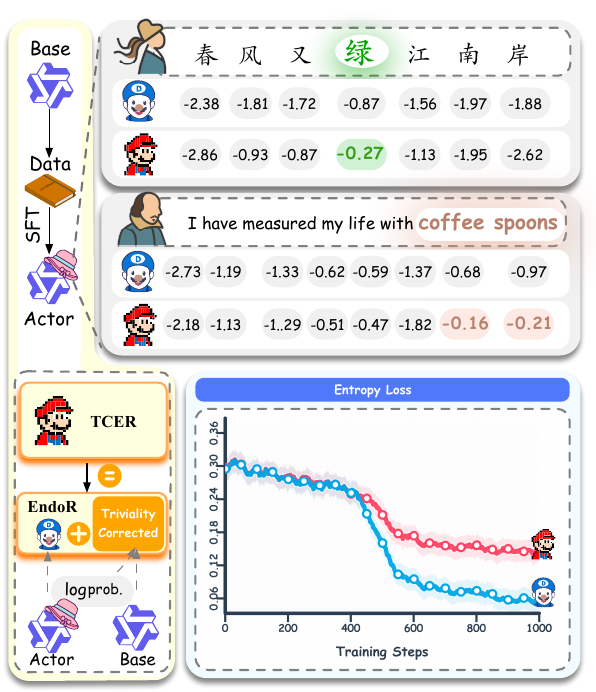}
    \caption{TCER computes rewards using both base and actor models, emphasizing high-entropy meaningful content over EndoR. After RL Training, TCER shows gentler entropy decay with enhanced exploration.}
    \label{fig:Introduction}
    \vspace{-4mm}
\end{figure}

\section{Introduction}

The capabilities of Large Language Models (LLMs) have advanced significantly, with alignment techniques playing a pivotal role in their success \cite{kumar2025llm, tie2025survey}. Following supervised fine-tuning (SFT), Reinforcement Fine-Tuning (RFT) has emerged as a crucial step for further optimizing model behavior according to specific objectives \cite{wu2025generalization, jin2025rl}. While RFT has proven effective in domains with clear evaluation criteria, its application to writing presents challenges that remain unresolved \cite{gooding2025writing, li2025jointly}. 

Unlike mathematical reasoning or code generation where correctness is objectively verifiable, writing quality is subjective and multifaceted \cite{ying2025beyond}. Current approaches rely on LLMs as judges \cite{zheng2023judging}, facing a dilemma: powerful proprietary models like GPT-4 provide high-quality evaluation \cite{wu2025longwriter} but incur substantial API costs \cite{huang2025empirical}, while training custom judge models requires extensive human-annotated preference data \cite{feng2025we, lu2025writing, wang2025evolvr}. The latter approach is further complicated by the subjectivity of writing, which makes consistent annotation challenging with different annotators often disagreeing on quality metrics \cite{ni2025can, chiang2023can}. In parallel, the mathematical reasoning community has developed a solution to avoid external supervision \cite{li2025generalist}: \textit{endogenous rewards}, where models use their own confidence scores (log-probabilities) as reward signals. This approach has demonstrated success in mathematics \cite{prabhudesai2025maximizing}. The appeal of such judge-free methods \cite{li2025confidence} naturally raises the question: can this paradigm be adapted to creative writing, solving the reward model problem?

In this paper, we perform a systematic investigation of endogenous reward RFT for writing. As shown in Figure\ref{fig:Introduction}, we find that endogenous rewards can improve over SFT baselines, suggesting that confidence signals remain useful in open-ended generation. However, we also identify a degeneration phenomenon: optimizing for self-confidence tends to drive the policy toward low-entropy behavior and reducing diversity and content richness \cite{li2025jointly}. We refer to this failure mode as \textit{Triviality Bias}.

To address this \textit{Triviality Bias}, we propose TCER (Triviality Corrected Endogenous Reward). We define a generalist reference policy $\pi_b$ and a specialist policy $\pi_s$ obtained by finetuning $\pi_b$ on high-quality in-domain data. TCER define the token-level log-likelihood ratio $\log\frac{\pi_s}{\pi_b}$, which favors tokens that are characteristic of the specialist's domain. To counteract the bias towards high-confidence predictions, we modulate this reward with an adaptive weight of $(1-\pi_s)^\lambda$. This mechanism suppresses the incentive for high-probability tokens and allocates reward to choices that are informative yet less predictable, thereby promoting exploration. We optimize using GRPO with reference augmentation to align model outputs with high-quality targets. Our main contributions are:
\begin{enumerate}
    \item We provide a systematic study of judge-free RFT via confidence-based endogenous rewards for writing, showing both its effectiveness and a degeneration phenomenon (\textit{Triviality Bias}) characterized by low-entropy output.
    \item We propose \textsc{TCER}, which mitigates triviality bias by rewarding relative information gain between a specialist policy and a generalist reference policy, modulated by a probability-dependent correction.
    \item We conduct extensive experiments across multiple models and datasets in writing and mathematical reasoning, demonstrating consistent improvements over baselines and cross-domain generalization.
\end{enumerate}
\vspace{-9pt}
~\section{Related Work}
\subsection{Open-Ended Text Generation}
Training models for proficient open-ended writing has been a central challenge in LLM development \cite{wei2025igniting}. The dominant approach relies on constructing high-quality datasets for SFT \cite{wang2024weaver}. LongWriter addresses the challenge of generating coherent long-form content by extending context windows and introducing specialized data collection pipelines \cite{bai2024longwriter}. DeepWriting takes a different approach through reverse-engineered reasoning \cite{wang2025reverse}, where high-quality outputs are augmented with synthetic planning traces. These datasets enable SFT models to achieve reasonable writing capabilitie \cite{kim2025align}, establishing the foundation for further optimization.

Beyond SFT, RFT has emerged as a promising direction for enhancing writing quality \cite{huang2025blending}. However, existing RL approaches for writing predominantly rely on external reward models. LongWriter-zero employs LLMs as judges\cite{wu2025longwriter}, using GPT-4 or similar models to score generated texts during RL training. Similarly, Writing-zero\cite{lu2025writing} and EvolvR\cite{wang2025evolvr} utilize pairwise preference judgments from large language models to construct reward signals for story generation and creative writing tasks. While these methods demonstrate improvements over SFT baselines, they inherit the fundamental limitation of requiring costly external evaluation\cite{wu2025rlac}, whether from human annotators or proprietary models.

\begin{figure*}[ht]
    \centering
    \includegraphics[width=0.95\textwidth]{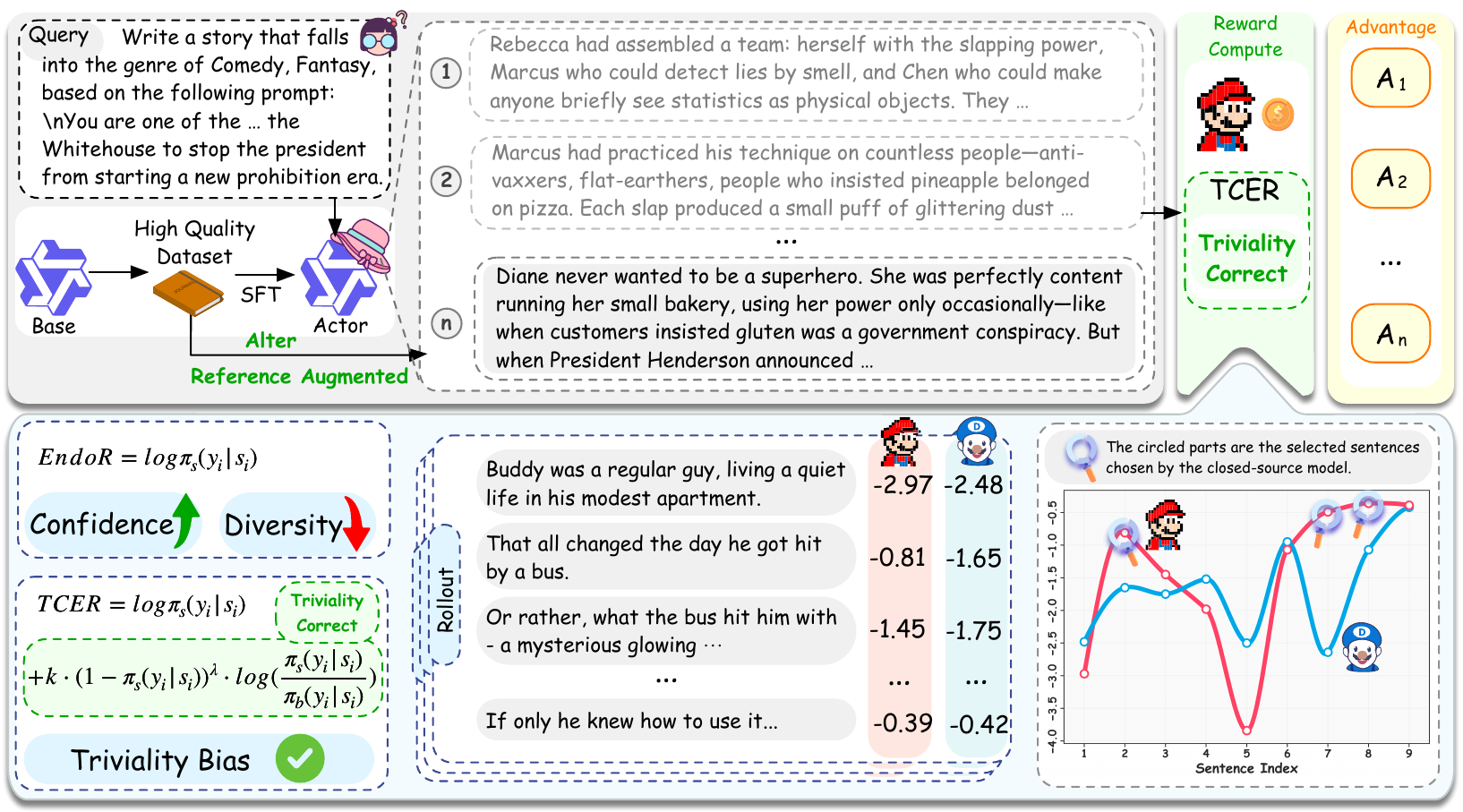}
    \caption{Overview of TCER training pipeline and reward comparison. (a) Training workflow: SFT on high-quality data to obtain $\pi_s$, followed by Reference Augmented GRPO with TCER that incorporate information gain $\phi$ and gating weight $w$. (b) Sentence-level reward visualization on a rollout: token-level rewards are averaged within each sentence comparing TCER and EndoR. (Rewards are average log-probs; higher is closer to $0$.)}
    \label{fig:method}    
    \vspace{-4mm}
\end{figure*}

\subsection{Unsupervised Rewards for RFT}

The mathematical reasoning domain has pioneered the exploration of unsupervised reward \cite{zhao2025learning} mechanisms for RL training. TTRL\cite{zuo2025ttrl} introduces entropy-based rewards by clustering rollout answers and rewarding based on cluster sizes, where higher consistency in solution clusters indicates more reliable reasoning paths \cite{zhang2025right}. Another line of work explores confidence-based approaches \cite{damani2025beyond，xiong2025self, prabhudesai2025maximizing}. Various methods directly leverage the model's inherent confidence as direct rewards for self-improvement \cite{li2025confidence}, and some constructing process reward models from confidence scores to provide fine-grained feedback \cite{tan2025gtpo}. Recently, theoretical work has established the endogenous reward hypothesis \cite{li2025generalist}, demonstrating that sufficiently capable models implicitly encode reward functions within their parameters. This provides mathematical justification for these confidence-based methods.

However, these methods have remained confined to mathematical reasoning and have not been applied to open-ended writing task \cite{yue2025does}. The challenge is the risk of entropy collapse \cite{zhang2025noveltybench, cui2025entropy, wang2025beyond} that confidence-based approaches inherently drive models toward generating high-frequency, formulaic content, resulting in decreased diversity and creative expression \cite{li2025jointly, west2025base}. Our work addresses this challenge by introducing a correction mechanism that mitigates entropy collapse in writing tasks, enabling the application of unsupervised RL methods beyond mathematical reasoning.

~\section{Methodology}
This section formulates judge-free RFT for open-ended text generation. We analyze the \textit{triviality bias} induced by endogenous rewards. We derive TCER from an information theoretic decomposition of a generalist policy and an in-domain specialist policy, modulated by a probability-dependent correction. Finally, we describe optimization with Reference Augmented Group Relative Policy Optimization (GRPO) \cite{shao2024deepseekmath}. The framework is shown in Figure\ref{fig:method}. Additional derivations are deferred to Appendix~\ref{app:derivations}.

\subsection{Implicit Rewards and Triviality Bias}
Recently, connections between next-token prediction and inverse reinforcement learning (IRL) motivate using model confidence as an endogenous reward. In IRL, an optimal policy under reward $r$ takes a Boltzmann form:
\begin{equation}
    \pi^*(y\mid x) \propto \exp\left(\frac{Q^*(x,y)}{\alpha}\right),
\end{equation}
where $Q^*$ is the soft $Q$-function and $\alpha$ is the temperature. In our setting, let $\pi_s$ denote a specialist policy obtained by SFT on high-quality in-domain data. A confidence based endogenous reward assigns each generated token $y_i$ in context $s_i$ the reward
\begin{equation}
    r_e(y_i \mid s_i) = \log \pi_s(y_i \mid s_i),
\end{equation}
where $s_i$ denotes the decoding context at position $i$. The degeneration induced by $r_e$ can be characterized through a context maximization. For any context $s$, consider the one step objective
\begin{equation}
    \max_{\pi(\cdot\mid s)} \ \mathbb{E}_{y\sim \pi(\cdot\mid s)}\big[\log \pi_s(y\mid s)\big].
    \label{eq:one_step_obj}
\end{equation}
Because the expectation in Eq.~\eqref{eq:one_step_obj} is linear in $\pi(\cdot\mid s)$, the maximizer is attained at an extreme point of the simplex which is a deterministic distribution:     
\begin{equation}
    \pi^{*}(\cdot\mid s)=\delta\!\left(y=\arg\max_{v\in\mathcal{V}}\pi_s(v\mid s)\right).
\end{equation}
Optimizing $r_e$ encourages low-entropy behavior by concentrating on the high probability tokens under $\pi_s$. In open-ended writing, these tokens correspond to high-frequency, yielding \textit{triviality bias}: generations become increasingly templated, with reduced diversity.

\subsection{Information Theoretic Decomposition}
To isolate token-level signals of in-domain specialization beyond generic preference, we decompose the specialist policy $\pi_s$ relative to a generalist baseline $\pi_b$, where $\pi_b$ and $\pi_s$ are the same model before and after in-domain fine-tuning, respectively. For any token and context,
\begin{equation}
    \pi_s(y_i\mid s_i) = \pi_b(y_i\mid s_i) \cdot \frac{\pi_s(y_i\mid s_i)}{\pi_b(y_i\mid s_i)}.
\end{equation}
Taking logarithms yields
\begin{equation}
    \log \pi_s(y_i\mid s_i) = \log \pi_b(y_i\mid s_i) + \phi(y_i\mid s_i),
\end{equation}
where we define the specific information gain as the token-level log-likelihood ratio
\begin{equation}
    \phi(y_i\mid s_i) = \log \frac{\pi_s(y_i \mid s_i)}{\pi_b(y_i \mid s_i)}.
    \label{eq:sig}
\end{equation}

\subsection{Triviality Corrected Endogenous Reward}
Directly using the log-likelihood ratio $\phi(y_i\mid s_i)$ as a reward can be numerically unstable when $\pi_b(y_i\mid s_i)$ is extremely small. We therefore retain the endogenous reward $\log \pi_s(y_i\mid s_i)$ and add a gated triviality correction. The TCER reward is defined as
\begin{equation}
    r_t(y_i\mid s_i) = \log \pi_s(y_i\mid s_i) + k \cdot w(y_i\mid s_i) \cdot \phi(y_i\mid s_i),
    \label{eq:rt}
\end{equation}
with the triviality corrected weighting
\begin{equation}
    w(y_i\mid s_i) = \left(1-\pi_s(y_i\mid s_i)\right)^{\lambda},
    \label{eq:gate}
\end{equation}
where $k>0$ scales the correction and $\lambda>0$ controls the sharpness of suppression. For numerical stability, we apply $\varepsilon$-smoothing with $\varepsilon>0$:
\begin{equation}
    \phi(y_i\mid s_i) \leftarrow \log\frac{\pi_s(y_i\mid s_i)+\varepsilon}{\pi_b(y_i\mid s_i)+\varepsilon}.
    \label{eq:eps}
\end{equation}
Giving a context $s_i$ and a candidate token $y_i$, and denote $p=\pi_s(y_i\mid s_i)\in[0,1]$ and $q=\pi_b(y_i\mid s_i)\in[0,1]$. The gated correction term in Eq.~\eqref{eq:rt} can be written as
\begin{equation}
    C(p) = (1-p)^{\lambda}\cdot \log\frac{p+\varepsilon}{q+\varepsilon}.
\end{equation}
Under Eq.~\eqref{eq:eps}, the smoothed log-ratio is uniformly bounded: since $p+\varepsilon\in[\varepsilon,1+\varepsilon]$ and $q+\varepsilon\in[\varepsilon,1+\varepsilon]$, we have
\begin{equation}
    \left|\log\frac{p+\varepsilon}{q+\varepsilon}\right| \le \log\frac{1+\varepsilon}{\varepsilon}.
    \label{eq:log_ratio_bound}
\end{equation}
Therefore, the correction term admits the quantitative bound
\begin{equation}
    |C(p)| \le (1-p)^{\lambda}\,\log\frac{1+\varepsilon}{\varepsilon}.
    \label{eq:cp_bound}
\end{equation}
In particular, as $p\to 1$, $(1-p)^{\lambda}\to 0$ while the factor in Eq.~\eqref{eq:log_ratio_bound} remains finite, yielding
\begin{equation}
    \lim_{p\to 1} C(p)=0.
\end{equation}
Conversely, if $p \le 1-\delta$ for some $\delta\in(0,1]$, then $(1-p)^\lambda \ge \delta^\lambda$, and the correction term satisfies
\begin{equation}
    |C(p)| \ge \delta^\lambda \left|\log\frac{p+\varepsilon}{q+\varepsilon}\right|.
    \label{eq:cp_lower}
\end{equation}
Eq.~\eqref{eq:cp_lower} implies that when $\pi_s(y_i|s_i)$ is moderate, the gate function preserves the information gain signal at least a $\delta^\lambda$ fraction of the log-ratio magnitude, allowing the correction to reward domain-specific tokens while suppressing high-probability trivial.
% Eq.~\eqref{eq:cp_lower} implies that away from the saturated regime, the gate preserves at least a $\delta^\lambda$ fraction of the log-ratio magnitude, thus when the log-ratio indicate a in-domain preference shift, the correction can remain appreciable on non-saturated tokens.

\begin{table*}[ht]
\centering
\renewcommand{\arraystretch}{1.2}
\setlength{\tabcolsep}{2mm}
\begin{tabular}{lccccccccc}
\hline
\textbf{Model} & \textbf{LB} & \textbf{HB-A} & \textbf{HB-B} & \textbf{WB-A} & \textbf{WB-B} & \textbf{WB-C} & \textbf{WB-D} & \textbf{WB-E} & \textbf{WB-F} \\
\hline 
\multicolumn{10}{l}{\textit{General Closed-source Models}} \\
GPT-4o & 83.1 & 83.7 & 87.6 & 74.4 & 73.4 & 74.3 & 77.9 & 75.8 & 78.0 \\
Claude 3.5 & 89.3 & 82.9 & 88.3 & 59.5 & 57.6 & 56.3 & 59.3 & 62.0 & 67.7 \\
Claude 3.7 & 97.8 & 83.9 & 93.2 & 78.2 & 77.9 & 76.5 & 79.3 & 79.2 & 80.8 \\
\hline
\multicolumn{10}{l}{\textit{General Open-source Models}} \\
Llama3.1-8B-Instruct & 60.3 & 45.5 & 41.3 & 47.5 & 45.6 & 43.7 & 42.3 & 48.6 & 50.8 \\
Qwen2.5-32B-Instruct & 78.8 & 77.0 & 78.4 & 52.5 & 49.8 & 51.0 & 49.6 & 53.9 & 54.2 \\
Qwen3-8B & 85.2 & 81.4 & 85.3 & 68.7 & 68.9 & 67.0 & 67.2 & 71.2 & 71.3 \\
Qwen3-32B & 93.2 & 84.0 & 86.8 & 79.8 & 78.2 & 80.8 & 78.9 & 82.9 & 81.9 \\
\hline
\multicolumn{10}{l}{\textit{Finetuned Open-source Models}} \\
Qwen2.5-7B-Instruct & 74.7 & 72.0 & 76.3 & 58.9 & 56.8 & 56.5 & 54.0 & 59.9 & 60.0 \\
+ SFT & 83.1 & 73.7 & 85.8 & 69.7 & 69.8 & 67.9 & 63.1 & 71.3 & 66.3 \\
+ SFT + RL-EndoR & 84.5 & 79.0 & 84.7 & 72.6 & 70.9 & 70.1 & 64.6 & 73.0 & 68.7 \\
+ SFT + \textbf{RL-TCER} & \textbf{86.3} & \textbf{81.1} & \textbf{86.4} & \textbf{72.7} & \textbf{73.1} & \textbf{70.8} & \textbf{67.8} & \textbf{73.4} & \textbf{71.1} \\
\hline
\end{tabular}
\caption{Performance comparison on writing benchmarks. We evaluate models across LongBench-Write, HelloBench subsets, and WritingBench domains. Starting from Qwen2.5-7B-Instruct, each training stage (SFT, RL-EndoR, RL-TCER) demonstrates progressive improvements.}
\label{tab:main_writing}
\vspace{-4mm}
\end{table*}

\subsection{Reference Augmented GRPO}
We optimize the policy $\pi_\theta$ against $r_t$ using GRPO. For a given input prompt $q$, we sample a group of $G$ outputs $\{o_1, \dots, o_G\}$ from the current policy $\pi_{\theta_{\text{actor}}}$. To stabilize the advantage estimation and guide exploration, we augment this group with the ground-truth reference $o_{\text{gt}}$ from the dataset, forming the augmented set $\mathcal{O} = \{o_1, \dots, o_G, o_{\text{gt}}\}$. Although $r_t$ is defined at the token level, GRPO uses an output-level scalar reward. We define the sequence-level reward as the length-normalized average of token rewards:
\begin{equation}
    R_i = \frac{1}{|o_i|}\sum_{t=1}^{|o_i|} r_t(o_{i,t}).
\end{equation}
The advantage $\hat{A}_i$ is computed by normalizing against the augmented group statistics:
\begin{equation}
    \hat{A}_i = \frac{R_i - \frac{1}{|\mathcal{O}|}\sum_{j \in \mathcal{O}} R_j}{\sigma_{\mathcal{O}}}.
\end{equation}
The inclusion of $o_{\text{gt}}$ elevates the baseline mean, effectively penalizing trivial generations that fail to match the reference under the reward. This sequence-level advantage is broadcast to each token position, and maximize the GRPO objective:
\begin{equation}
\small
\begin{aligned}
    \mathcal{J}_{\text{GRPO}}(\theta) = \mathbb{E}\Big[ \min\big(&\rho_{i,t}(\theta)\hat{A}_i,\ 
    \text{clip}(\rho_{i,t}(\theta),1-\epsilon, \\
    & 1+\epsilon)\hat{A}_i\big) -\beta\, D_{\mathrm{KL}}(\pi_\theta\Vert\pi_{\mathrm{ref}})
    \Big].
\end{aligned}
\label{eq:grpo_compact}
\end{equation}
where $\rho_{i}(\theta)$ is the policy ratio, $\epsilon$ is the clipping parameter, and $\beta$ controls the KL divergence penalty to a reference policy $\pi_{\text{ref}}$.

~\section{Experiments}
\subsection{Experimental Setup}
\paragraph{Datasets.}
Our experimental setup utilizes distinct datasets tailored for writing and mathematical reasoning tasks to ensure specialized training and robust evaluation. For writing tasks, we employ two high-quality datasets:
\begin{itemize}
    \item \textbf{DeepWriting} \cite{wang2025reverse}: A collection of 20,000 high-quality writing samples designed to elicit complex and nuanced text generation.
    \item \textbf{LongWriter} \cite{bai2024longwriter}: A dataset comprising 6,000 samples focused on long-form, coherent content creation.
\end{itemize}
For both datasets, we implement a 50/50 split. The first half is used for the SFT phase to build an in-domain policy ($\pi_{\text{s}}$), while the second, disjoint half is used as the source of prompts for the RL phase. This separation prevents data leakage between the SFT and RL stages and ensures that the RL algorithm optimizes on unseen prompts. For mathematical reasoning tasks, we follow the data strategy established by LUFFY \cite{yan2025learning}. We use a 45,000-sample subset of the OpenR1-Math-220k dataset for the SFT phase. For the RL phase, we train our model on the Math-LightEval dataset.

\begin{table*}[ht]
\centering
\renewcommand{\arraystretch}{1.2}
\setlength{\tabcolsep}{2.5mm}
\begin{tabular}{lccccccccc}
\hline
\textbf{Model} & \textbf{LB} & \textbf{HB-A} & \textbf{HB-B} & \textbf{WB-A} & \textbf{WB-B} & \textbf{WB-C} & \textbf{WB-D} & \textbf{WB-E} & \textbf{WB-F} \\
\hline
\multicolumn{10}{l}{\textit{DeepWriting Dataset}} \\
DeepWriter-8B & 85.6 & 77.8 & 85.7 & 72.2 & 71.8 & 69.8 & 70.6 & 73.7 & 72.3 \\
+ RL-EndoR & 86.1 & 79.6 & \textbf{86.9} & 76.0 & 73.6 & 73.9 & 69.6 & \textbf{77.3} & 74.5 \\
+ \textbf{RL-TCER} & \textbf{87.3} & \textbf{82.2} & \underline{86.2} & \textbf{76.5} & \textbf{75.3} & \textbf{74.8} & \textbf{74.2} & \underline{76.7} & \textbf{75.2} \\
\hline
\multicolumn{10}{l}{\textit{LongWriter Dataset}} \\
LongWriter-8B & 76.5 & 68.8 & 82.9 & 57.9 & 53.9 & 49.0 & 52.0 & 52.9 & 52.0 \\
+ RL-EndoR & 80.2 & 70.1 & 84.2 & 64.4 & 62.6 & 60.2 & 56.7 & 59.9 & 58.8 \\
+ \textbf{RL-TCER} & \textbf{82.6} & \textbf{72.6} & \textbf{84.5} & \textbf{65.8} & \textbf{62.9} & \textbf{61.1} & \textbf{57.1} & \textbf{60.6} & \textbf{59.8} \\
\hline
\end{tabular}
\caption{Effectiveness of TCER for existing finetuned models. Applying our RL methods to finetuned models DeepWriter-8B and LongWriter-8B.}
\label{tab:robustness}
\vspace{-4mm}
\end{table*}

\paragraph{Benchmarks.}
To provide a comprehensive and multi-aspect evaluation of our models, we employ a suite of specialized benchmarks targeting both writing proficiency and mathematical reasoning. For Writing Tasks, we probe three distinct dimensions of generative performance using three complementary benchmarks following the evaluation protocol established in DeepWriter:
\begin{itemize}
    \item \textbf{LongBench-Write (LB)} \cite{bai2024longwriter}: This benchmark is designed to measure a model’s ability to produce ultra-long-form text, allowing us to assess its foundational capacity for maintaining thematic consistency over extended outputs.
    \item \textbf{HelloBench (HB)} \cite{que2024hellobench}: HelloBench evaluates performance on a diverse set of in-the-wild tasks. Our analysis focuses on two key subsets: \textbf{HB-A} (Open-Ended QA), which tests the generation of detailed and nuanced answers, and \textbf{HB-B} (Heuristic Text Generation), which assesses creative reasoning and stylistic fidelity in long-form narrative continuation.
    \item \textbf{WritingBench (WB)} \cite{wu2025writingbench}: This benchmark measures controllability across six writing domains: \textbf{A} (Academic \& Engineering), \textbf{B} (Finance \& Business), \textbf{C} (Politics \& Law), \textbf{D} (Literature \& Arts), \textbf{E} (Education), and \textbf{F} (Advertising \& Marketing).
\end{itemize}
Given the subjective nature of open-ended generation, we adopt the established protocol of using close-source LLMs as judges. Specifically, Claude-Sonent37 was used to score outputs for LongBench and WritingBench, while GPT-4o was employed for HelloBench. The reported scores represent win rates or the original benchmark scores without rescaling. All results are averaged over three independent runs.

For Mathematical Reasoning, we evaluate our models on six widely recognized benchmarks to test the cross-domain generalization of our method: AIME 2024 \cite{li2024numinamath}, AIME 2025 \cite{li2024numinamath}, AMC \cite{li2024numinamath}, MATH-500 \cite{hendrycks2021measuring}, Minerva \cite{lewkowycz2022solving} and OlympiadBench \cite{he2024olympiadbench}. For AIME 2024, AIME 2025, and AMC, which have smaller test sets, we report the avg@32 metric. For the other three larger benchmarks, we report standard pass@1 accuracy.

% \paragraph{Implementation Details.}
% Across all tasks, we establish a consistent experimental framework. For Writing Tasks, we conducted experiments on three powerful open-source models: Qwen2.5-7B-Instruct, Qwen3-8B, and Llama3.1-8B-Instruct. For Mathematical Reasoning, we conducted experiments on Qwen2.5-Math-7B and Llama3.1-8B-Instruct. The training parameters were tailored to each task domain:
% \begin{itemize}
%     \item For Writing Tasks, SFT was conducted on DeepWriter and LongWriter for 1 epoch with a learning rate of $2 \times 10^{-5}$ and a batch size of 8. The subsequent RL phase was run for 1,000 steps.
%     \item For Math Reasoning Tasks, we performed SFT for 3 epochs and RL for 500 steps. The learning rate was set to $1 \times 10^{-5}$ with a batch size of 64.
% \end{itemize}
% All training was performed on a cluster of 16x NVIDIA H20 GPUs. To ensure efficient reward computation during the RL phase, the actor policy ran on the main training GPUs, while the SFT policy ($\pi_{\text{SFT}}$) and the base policy ($\pi_{\text{Base}}$) were deployed on a separate set of 2x NVIDIA H20 GPUs. 

\begin{table*}[ht]
\centering
\renewcommand{\arraystretch}{1.2}
\setlength{\tabcolsep}{2mm}
\begin{tabular}{lccccccc}
\hline
\textbf{Method} & \textbf{AIME 24} & \textbf{AIME 25} & \textbf{AMC} & \textbf{MATH-500} & \textbf{Minerva} & \textbf{Olympiad} & \textbf{Avg} \\
\hline
Qwen2.5-7B-Math & 11.2 & 4.4 & 30.8 & 46.0 & 7.7 & 14.4 & 19.1 \\
+ SFT & 22.2 & 22.3 & 52.8 & 82.6 & 40.8 & 43.7 & 44.1 \\
+ SFT + RL-EndoR & 30.5 & 25.4 & 60.9 & \textbf{87.0} & 42.6 & 46.8 & 48.9 \\
+ SFT + \textbf{RL-TCER} & \textbf{32.4} & \textbf{26.1} & \textbf{62.1} & \underline{86.4} & \textbf{44.5} & \textbf{49.6} & \textbf{50.2} \\
\hline
Llama3.1-8B-Instruct & 4.6 & 0.2 & 21.2 & 46.4 & 20.9 & 12.4 & 17.6 \\
+ SFT & 9.4 & 9.5 & 39.0 & 68.4 & 27.2 & 32.4 & 30.9 \\
+ SFT + RL-EndoR & 9.7 & 13.2 & 39.2 & 70.0 & 25.7 & 33.3 & 31.9 \\
+ SFT + \textbf{RL-TCER} & \textbf{10.2} & \textbf{15.0} & \textbf{40.5} & \textbf{71.8} & \textbf{28.6} & \textbf{35.3} & \textbf{33.6} \\
\hline
\end{tabular}
\caption{Performance comparison on mathematical reasoning benchmarks. TCER outperforms both the SFT baseline and EndoR across different base models.}
\label{tab:math}
\vspace{-3mm}
\end{table*}

\subsection{Main Results}
Table~\ref{tab:main_writing} details the results of our full training pipeline, starting from the base Qwen2.5-7B-Instruct model and applying SFT and RL sequentially on the DeepWriting dataset. Fine-tuning on the DeepWriting dataset boosts the model's performance across all benchmarks compared to the original Qwen2.5-7B-Instruct model. For instance, the LongBench (LB) score jumps from 74.7 to 83.1, and the average WritingBench (WB) performance is significantly elevated, demonstrating the effectiveness of domain specialization. Applying RL with the endogenous reward provides further improvements over the SFT model, confirming that endogenous rewards are a viable signal. However, its gains are inconsistent and marginal in several creativity intensive dimensions. Our proposed TCER method consistently outperforms all baselines, achieving the highest scores across all reported benchmarks for our finetuned models. 

To assess the applicability of TCER as a post-training enhancement technique, we applied our RL method to finetuned open-source writing models: DeepWriter-8B and LongWriter-8B. These models represent a high standard of performance. The results, shown in Table~\ref{tab:robustness}, demonstrate that TCER can further elevate their capabilities. For the DeepWriter-8B model, TCER boosts the score from 70.6 to \textbf{74.2} on WritingBench-D (Literature \& Arts), and on HelloBench-A (Ability), it increases the score from 77.8 to \textbf{82.2}. Similarly, for the LongWriter-8B model, TCER provides a targeted boost to its primary strength. The LongBench (LB) score improves from 76.5 to \textbf{82.6} after applying TCER, a substantial (+6.1) point increase that surpasses the gain from RL-EndoR.

\begin{figure}[ht]
    \centering
    \includegraphics[width=0.48\textwidth]{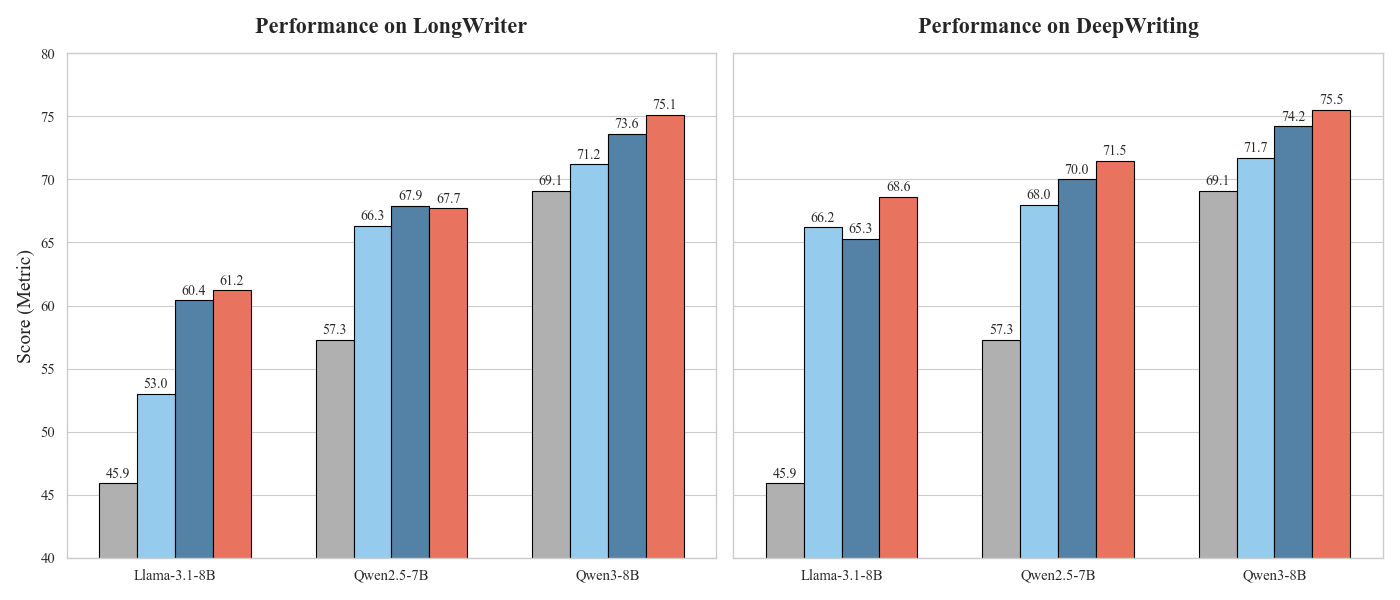}
    \caption{Performance generalization across different models and datasets. For each model group, the bars represent the performance of the base model (grey), after SFT (light blue), after applying RL-EndoR (dark blue), and after applying our RL-TCER (orange).}
    \label{fig:model_generalization}
    \vspace{-4mm}
\end{figure}

\subsection{Generalization to Different Models}
To verify that the benefits of TCER are not specific to a single model architecture, we conducted a comparative analysis across three distinct open-source models: Llama-3.1-8B-Instruct, Qwen2.5-7B-Instruct, and Qwen3-8B. Each model was subjected to the same SFT and RL pipeline on both the LongWriter and DeepWriting datasets. The results, visualized in Figure~\ref{fig:model_generalization}, demonstrate the robust, model-agnostic nature of our approach. A clear and consistent trend emerges across all six configurations: our method consistently achieves the highest performance, surpassing both the SFT baseline and the EndoR. These results indicate that TCER is a applicable enhancement, offering performance gains regardless of the underlying model architecture.

\subsection{Generalization to Math Reasoning}
To investigate the cross-domain applicability of our method, we extended our evaluation to the verifiable and logical domain of mathematical reasoning. We applied the same SFT and RL training pipeline to two different base models, Qwen2.5-7B and Llama3.1-8B-Instruct. The results, presented in Table~\ref{tab:math}, demonstrate that TCER's benefits are not confined to open-end generation but generalize effectively to mathematical reasoning. For both the Qwen and Llama models, our RL-TCER method consistently achieves the average scores, outperforming both the SFT baseline and the RL-EndoR approach. This suggests that by penalizing trivial, high-probability steps and rewarding more diversity ones, TCER encourages the model to explore potentially correct solution paths.

\begin{table}[t]
\centering
\renewcommand{\arraystretch}{1.2}
\setlength{\tabcolsep}{2.5mm}
\begin{tabular}{l|cc}
\hline
\textbf{Metric} & \textbf{EndoR} & \textbf{TCER} \\
\hline
High-quality avg. & -1.11 & \textbf{-0.80}\\
Regular avg. & -1.08 & -0.93 \\
High-quality Recall \% & 20.7\% & \textbf{35.8\%} \\
\hline
\end{tabular}
\caption{Sentence-level analysis of reward signals. Sentences identified as brilliant by Gemini-2.5-Pro, GPT-4o, Claude-Opus4 are evaluated. Analysis conducted on 10,000 outputs with $\geq$5 high-quality sentences per text.}
\label{tab:micro_validation}
\vspace{-4mm}
\end{table}

\begin{table*}[ht]
\centering
\renewcommand{\arraystretch}{1.2}
\setlength{\tabcolsep}{1.8mm}
\begin{tabular}{lccccccccc}
\hline
\textbf{Config} & \textbf{LB} & \textbf{HB-A} & \textbf{HB-B} & \textbf{WB-A} & \textbf{WB-B} & \textbf{WB-C} & \textbf{WB-D} & \textbf{WB-E} & \textbf{WB-F} \\
\hline
\textbf{FULL TCER} & \textbf{86.3} & \textbf{81.1} & \textbf{86.4} & \textbf{72.7} & \textbf{73.1} & \textbf{70.8} & \textbf{67.8} & \textbf{73.4} & \textbf{71.1} \\
\hline
w/o $w(y_i|s_i) \cdot \phi(y_i|s_i)$ & 84.5 & 79.0 & 84.7 & 72.6 & 70.9 & 70.1 & 64.6 & 73.0 & 68.7 \\
w/o $\log \pi_s(y_i|s_i)$ & 76.2 & 70.8 & 72.0 & 68.5 & 68.2 & 65.6 & 60.1 & 69.4 & 64.1 \\
w/o $w(y_i|s_i)$ & 81.5 & 76.5 & 79.1 & 70.6 & 70.5 & 71.2 & 62.7 & 73.1 & 65.3 \\
w/o Ref-Augmentation & 85.7 & 78.2 & 85.6 & 72.9 & 72.3 & 70.6 & 66.6 & 73.3 & 71.2 \\
\hline
\end{tabular}
\caption{Ablation studies on Qwen2.5-7B-Instruct with the DeepWriting dataset. Each component contributes to the overall performance.}
\label{tab:ablation}
\vspace{-4mm}
\end{table*}

\subsection{Validation of Quality Enhancement}
To validate that TCER identifies and rewards high quality content, we conducted a sentence-level analysis comparing the reward signals of TCER and EndoR. First, we prompted the SFT model to generate a diverse set of outputs across writing tasks. Second, we employed a panel of three LLMs including Gemini 2.5 Pro, Claude-Opus4, and GPT-4o as judges. Through aggregation by designed prompt (with Gemini 2.5 Pro performing the aggregation), these judges identified sentences exhibiting exceptional writing quality. Third, for each sentence in the corpus, we computed both the EndoR score and the TCER score (scores are aggregated log-probs; higher is closer to $0$). While our reward mechanism operates at the token level, we aggregate token rewards within each sentence for interpretability, since training uses sequence-level aggregation of token rewards and sentence-level trends reveal where rewards concentrate in the generated text.

We evaluated three key metrics: (1) the average reward score for sentences identified as high-quality, (2) the average reward score for regular sentences, and (3) the recall rate of high-quality sentences. For the recall metric, we select the top-k sentences based on reward scores within each generated text, where k equals the number of high-quality sentences identified by the judge panel for that text, then calculate the recall rate for each text and report the average across all texts in Table\ref{tab:micro_validation}. We visualized the sentence-level reward trajectories for eight SFT outputs in Figure\ref{fig:Validation_of_Quality_Enhancement}. The visualizations reveal a consistent pattern: TCER and EndoR show notable divergence specifically at high-quality sentences identified by judges, with TCER assigning higher rewards at these points. In contrast, for regular sentences, the two methods produce comparable scores, indicating that TCER selectively amplifies rewards for exceptional content while maintaining similar baseline evaluations.

\begin{figure}[ht]
    \centering
    \includegraphics[width=0.49\textwidth]{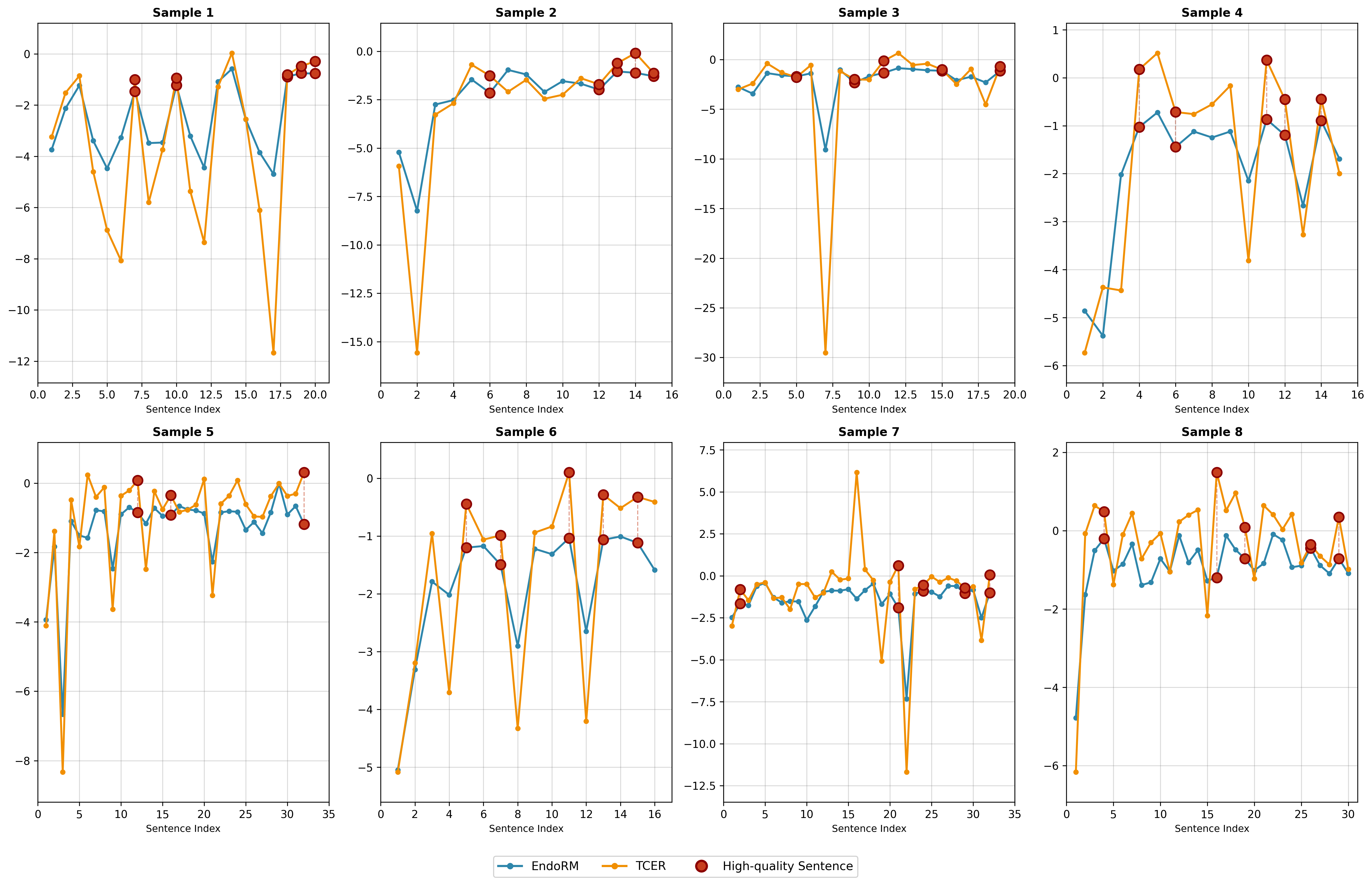}
    \caption{Displays these trajectories as line plots, with EndoR scores shown in blue and TCER scores in orange. High-quality sentences identified by our judge panel are marked with bold red points.}
    \label{fig:Validation_of_Quality_Enhancement}
    \vspace{-4mm}
\end{figure}

\subsection{Ablation Study}
To dissect the TCER formulation and isolate the contribution of its key components, we conducted a comprehensive ablation study. The experiments were performed on the Qwen2.5-7B-Instruct model with the DeepWriting dataset configuration. The results, detailed in Table~\ref{tab:ablation}.
\begin{itemize}
    \item \textbf{Removing the triviality correction:} Our method reverts to the EndoR baseline. This results in a performance drop across most benchmarks. This validates that the correction term is essential for mitigating triviality bias.
    \item \textbf{Removing the endogenous reward:} Relying only on the correction signal leads to a catastrophic performance collapse. This shows that $\log \pi_s(y_i|s_i)$ is crucial for maintaining overall generation quality and coherence.
    \item \textbf{Removing the gating coefficient:} Removing $w(y_i|s_i)$ causes a considerable performance drop, indicating the importance of selectively applying the correction to novel, low-probability tokens.
    \item \textbf{Removing the reference augmented:} Removing the ground-truth reference from GRPO results in lower performance, supporting the role of reference samples in elevating the optimization baseline.
\end{itemize}
\vspace{-4mm}

~\section{Conclusion}

We presented the study of transferring unsupervised RL methods from mathematical reasoning to high-quality writing. While endogenous rewards improve performance, they introduce Triviality Bias suppressing diversity and quality. Our proposed TCER addresses this through information theoretic rewards that measure divergence from base models, modulated by triviaity corrected weighting. Experiments demonstrate TCER's effectiveness across both writing and mathematical domains, achieving quality improvements. By eliminating dependence on external reward models, TCER offers a practical path toward scalable, high-quality text generation without costly human annotation evalution or API access.

% Bibliography entries for the entire Anthology, followed by custom entries
%\bibliography{anthology,custom}
% Custom bibliography entries only
\section*{Limitations}
Despite the demonstrated effectiveness of TCER, several limitations warrant discussion. First, as a self-supervised reinforcement learning method, while it successfully optimizes the model's existing capabilities and improves output quality, it has an inherent ceiling for endogenous reward approach. Second, our evaluation relies on automated benchmarks that employ LLMs as judges. While we acknowledge potential biases in using model-based evaluation, these benchmarks represent the current standard for scalable assessment of generation quality. Third, our analysis of reward attribution at the token level remains limited. Although TCER computes rewards based on token-level probabilities, the complex interdependencies in natural language limit the interpretability of why certain generation patterns receive higher rewards, making it difficult to provide fine-grained insights into the model's decision-making process. Future research should investigate the integration of endogenous and external rewards, alongside developing improved evaluation methodologies and interpretability techniques for deeper insights into model behavior.

\bibliography{custom}

\clearpage
\appendix
% \onecolumn
~\appendix
\vspace{-15pt}
\section{Additional Derivations and Proofs}
\label{app:derivations}

\subsection{Linearity of the one step objective and deterministic optimizer}
\label{app:linear_vertex}
We justify the claim in the main text that the one-step objective in Eq.~\eqref{eq:one_step_obj}
is linear in the policy $\pi(\cdot\mid s)$ for a fixed context $s$, and that an optimum is attained by a deterministic distribution. Fix a context $s$ and let $\mathcal{V}$ be the vocabulary. Let
\begin{equation}
    f(v) := \log \pi_s(v\mid s),  v\in\mathcal{V}.
\end{equation}
The one-step objective can be written as
\begin{equation}
    J(\pi;\,s) := \mathbb{E}_{y\sim \pi(\cdot\mid s)}[f(y)] = \sum_{v\in\mathcal{V}} \pi(v\mid s)\, f(v),
    \label{eq:app_J_linear}
\end{equation}
where $\pi(\cdot\mid s)$ ranges over the probability simplex
\begin{equation}
    \Delta(\mathcal{V}) := \Big\{\pi:\mathcal{V}\to[0,1]\ \big|\ \sum_{v\in\mathcal{V}}\pi(v)=1\Big\}.
\end{equation}

\begin{lemma}[Linearity in $\pi$]
\label{lem:linearity}
For any two distributions $\pi_1,\pi_2\in\Delta(\mathcal{V})$ and any $\lambda\in[0,1]$,
\begin{equation}
\begin{aligned}
    J(\lambda \pi_1 + (1-\lambda)\pi_2;\,s) &= \lambda J(\pi_1;\,s) \\
    &\quad + (1-\lambda)J(\pi_2;\,s).
\end{aligned}
\end{equation}
\end{lemma}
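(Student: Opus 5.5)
The plan is to expand both sides using the explicit sum representation in Eq.~\eqref{eq:app_J_linear}. Before doing so, I will check that the mixture $\pi_\lambda := \lambda\pi_1 + (1-\lambda)\pi_2$ is again an element of $\Delta(\mathcal{V})$. This holds because each value $\pi_\lambda(v)$ is a convex combination of two numbers in $[0,1]$, so it lies in $[0,1]$. Moreover, $\sum_v \pi_\lambda(v) = \lambda\cdot 1 + (1-\lambda)\cdot 1 = 1$. So $J(\pi_\lambda;\,s)$ is well defined by the same formula.

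Next I will write
\begin{equation*}
J(\pi_\lambda;\,s) = \sum_{v\in\mathcal{V}} \bigl(\lambda\pi_1(v\mid s) + (1-\lambda)\pi_2(v\mid s)\bigr) f(v).
\end{equation*}
I will distribute $f(v)$ across the bracket and split the finite sum into two sums. Then I will pull the constants $\lambda$ and $1-\lambda$ outside. This gives $\lambda\sum_v \pi_1(v\mid s) f(v) + (1-\lambda)\sum_v \pi_2(v\mid s) f(v)$, which is exactly $\lambda J(\pi_1;\,s) + (1-\lambda)J(\pi_2;\,s)$ by Eq.~\eqref{eq:app_J_linear}.

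The only genuine obstacle is that $f(v) = \log\pi_s(v\mid s)$ may equal $-\infty$ when $\pi_s(v\mid s)=0$, and then the manipulations above are not automatically licensed. I will handle this in one of two ways. The first option is to note that a softmax policy has $\pi_s(v\mid s)>0$ for all $v$, so $f$ is finite on the finite set $\mathcal{V}$ and all sums are finite. The second option is to adopt the convention $0\cdot(-\infty)=0$ and work in $[-\infty,\infty)$. In that setting every term $\pi(v\mid s)f(v)$ is either finite or $-\infty$, so no $\infty-\infty$ ambiguity arises. Both sides of the identity then equal $-\infty$ exactly when some $v$ with $f(v)=-\infty$ has positive mass under $\pi_1$ (with $\lambda>0$) or under $\pi_2$ (with $\lambda<1$). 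Otherwise the finite computation applies verbatim. I expect to state the first option as the main case and remark on the second.
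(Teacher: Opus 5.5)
Your proposal is correct and takes the same approach as the paper's proof: expand $J(\lambda\pi_1+(1-\lambda)\pi_2;\,s)$ via the finite sum in Eq.~\eqref{eq:app_J_linear}, distribute $f(v)$, split the sum, and factor out $\lambda$ and $1-\lambda$. You also check that the mixture lies in $\Delta(\mathcal{V})$, and that $f$ is finite (or that the convention $0\cdot(-\infty)=0$ in $[-\infty,\infty)$ rules out any $\infty-\infty$ ambiguity); the paper leaves these points implicit, and you handle them correctly.
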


\begin{proof}
By Eq.~\eqref{eq:app_J_linear},
\begin{align}
J(&\lambda \pi_1 + (1-\lambda)\pi_2;\,s) \nonumber\\
&= \sum_{v\in\mathcal{V}} \big(\lambda \pi_1(v) + (1-\lambda)\pi_2(v)\big) f(v) \nonumber\\
&= \lambda \sum_{v\in\mathcal{V}} \pi_1(v) f(v) \nonumber + (1-\lambda)\sum_{v\in\mathcal{V}} \pi_2(v) f(v) \nonumber\\
&= \lambda J(\pi_1;\,s) + (1-\lambda)J(\pi_2;\,s).
\end{align}
\end{proof}

\begin{lemma}[Deterministic optimizer]
\label{prop:det_opt}
The maximum of $J(\pi;\,s)$ over $\Delta(\mathcal{V})$ is achieved by a deterministic distribution. In particular,
\begin{equation}
    \max_{\pi\in\Delta(\mathcal{V})} J(\pi;\,s)
    =
    \max_{v\in\mathcal{V}} f(v),
\end{equation}
and any optimizer is of the form $\pi^*(\cdot\mid s)=\delta_{v^*}(\cdot)$ where $\delta_{v^*}$ denotes the Dirac distribution at $v^*\in\arg\max_{v\in\mathcal{V}} f(v)$.
\end{lemma}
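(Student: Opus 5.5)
The plan is a direct upper-bound-plus-attainment argument using the linear form in Eq.~\eqref{eq:app_J_linear}. Write $M := \max_{v\in\mathcal{V}} f(v)$; this maximum exists because $\mathcal{V}$ is finite. We may allow $f(v)=-\infty$ when $\pi_s(v\mid s)=0$, with the usual convention $0\cdot(-\infty)=0$. Because $\pi_s(\cdot\mid s)$ is a distribution, at least one $v$ has $\pi_s(v\mid s)>0$, so $M$ is finite.

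\textbf{Upper bound.} For any $\pi\in\Delta(\mathcal{V})$, each term satisfies $\pi(v\mid s) f(v) \le \pi(v\mid s) M$ because $\pi(v\mid s)\ge 0$. Summing over $v$ and using $\sum_v \pi(v\mid s)=1$ gives $J(\pi;\,s)\le M$.

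\textbf{Attainment.} For any $v^*\in\arg\max_v f(v)$, the Dirac distribution gives $J(\delta_{v^*};\,s)=f(v^*)=M$ by Eq.~\eqref{eq:app_J_linear}. Together with the upper bound, this proves $\max_{\pi\in\Delta(\mathcal{V})} J(\pi;\,s)=\max_{v} f(v)$, and the maximum is attained at a deterministic distribution. Lemma~\ref{lem:linearity} gives the geometric picture: a linear functional on the simplex is maximized at a vertex, and the vertices are exactly the Dirac distributions. The elementwise bound above makes this explicit, so no appeal to general convex-analysis theorems is needed.

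\textbf{Characterizing all optimizers.} I expect this to be the main obstacle, because the statement as literally phrased needs a caveat. If $J(\pi;\,s)=M$, then
\begin{equation}
\sum_{v\in\mathcal{V}} \pi(v\mid s)\,\big(M-f(v)\big)=0 .
\end{equation}
Every summand is nonnegative, so each must vanish. Hence $\pi(v\mid s)=0$ whenever $f(v)<M$, that is, $\pi(\cdot\mid s)$ is supported on $\arg\max_v f(v)$. For $v$ with $f(v)=-\infty$, this step uses the convention above: such a term with positive weight would force $J=-\infty$, so those $v$ must receive zero weight anyway.

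It follows that \emph{every} optimizer is a Dirac mass $\delta_{v^*}$ exactly when the argmax is unique, which is the generic case for continuous model logits. When there are ties, the optimizers are precisely the mixtures of Dirac masses over the tied tokens. I would state this refinement explicitly, reading ``any optimizer is of the form $\delta_{v^*}$'' under a uniqueness assumption, or as ``a deterministic optimizer exists and all optimizers are supported on the argmax.'' This qualification does not affect the main-text conclusion: optimizing $r_e$ collapses the policy onto the highest-probability tokens of $\pi_s$.
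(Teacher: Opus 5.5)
Your proposal is correct and follows essentially the same argument as the paper: bound $J(\pi;\,s)\le M$ termwise using $\sum_v \pi(v\mid s)=1$, note that equality forces all mass onto $\arg\max_v f(v)$, and attain $M$ with a Dirac mass. Your caveat about ties is also correct. The paper's own ``equality iff'' step yields exactly the mixtures over tied maximizers, so the claim that \emph{every} optimizer is a single Dirac mass holds only when the argmax is unique.
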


\begin{proof}
Let $M:=\max_{v\in\mathcal{V}} f(v)$. For any $\pi\in\Delta(\mathcal{V})$,
\begin{equation}
    J(\pi;\,s)=\sum_{v\in\mathcal{V}}\pi(v)f(v)\le \sum_{v\in\mathcal{V}}\pi(v)M=M,
\end{equation}
since $\sum_v \pi(v)=1$. Equality holds iff $\pi$ places all mass on the set of maximizers of $f$, i.e., $\pi(v)=0$ for all $v$ such that $f(v)<M$. In particular, choosing any single maximizer $v^*$ and setting $\pi(v^*)=1$ achieves $J(\pi;\,s)=M$.
\end{proof}

The above results demonstrate that optimizing the endogenous reward without regularization inevitably leads to deterministic policies, providing theoretical support for the entropy collapse phenomenon observed in practice.

\subsection{Coverage of the information gain correction}
\label{app:coverage}

The main text uses the intuition that TCER does not only suppress saturated tokens but also affects \emph{how broadly} the information-gain signal can contribute within a context. We formalize this via a context-wise coverage quantity that measures the expected activity of the gating function.

\paragraph{Definition.}
Given a context $s$, define the gate $w(y\mid s)=(1-\pi_s(y\mid s))^\lambda\in[0,1]$ and the context-wise coverage
\begin{equation}
\begin{aligned}
    S(s) &:= \mathbb{E}_{y\sim\pi_s(\cdot\mid s)}[w(y\mid s)] \\
    &= \sum_{v\in\mathcal{V}} \pi_s(v\mid s)\big(1-\pi_s(v\mid s)\big)^\lambda.
\end{aligned}
\label{eq:app_S}
\end{equation}
The coverage $S(s)$ quantifies how much the correction term remains active on average under the distribution $\pi_s(\cdot\mid s)$.

\begin{lemma}[Range and deterministic case]
\label{lem:S_range}
For any context $s$, $S(s)\in[0,1]$. Moreover, if $\pi_s(\cdot\mid s)$ is deterministic (i.e., $\exists v^*$ s.t.\ $\pi_s(v^*\mid s)=1$), then $S(s)=0$.
\end{lemma}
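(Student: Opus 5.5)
The plan is to bound $S(s)$ termwise, reading it as in Eq.~\eqref{eq:app_S}: an expectation under $\pi_s(\cdot\mid s)$ of the gate $w(y\mid s)$.

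For the range, fix $s$ and write $p_v:=\pi_s(v\mid s)\in[0,1]$, with $\sum_{v\in\mathcal{V}}p_v=1$. Since $1-p_v\in[0,1]$ and $\lambda>0$, we have $0\le(1-p_v)^\lambda\le 1$. This is monotonicity of $t\mapsto t^\lambda$ on $[0,1]$, using the convention $0^\lambda=0$, which is consistent with $\lambda>0$. Each summand therefore satisfies $0\le p_v(1-p_v)^\lambda\le p_v$. Summing over $\mathcal{V}$ gives
\begin{equation}
0\le S(s)\le \sum_{v\in\mathcal{V}}p_v=1.
\end{equation}
Equivalently, $S(s)$ is a convex combination of values in $[0,1]$, which is the same structure as in the proof of Lemma~\ref{prop:det_opt}.

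For the deterministic case, suppose $p_{v^*}=1$. Then $p_v=0$ for every $v\neq v^*$, because the probabilities are nonnegative and sum to one. Each term with $v\ne v^*$ therefore vanishes through the factor $p_v=0$. The single remaining term is $1\cdot(1-1)^\lambda=0^\lambda=0$, again using $\lambda>0$. Hence $S(s)=0$.

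There is no real obstacle here. The only point needing care is the strict positivity of $\lambda$: it ensures $0^\lambda=0$ rather than the $0^0=1$ ambiguity, and it ensures $t^\lambda\le 1$ on $[0,1]$. I would state this explicitly when invoking the standing assumption $\lambda>0$ from Eq.~\eqref{eq:gate}.
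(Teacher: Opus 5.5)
Your proof is correct and follows the paper's argument: both treat $S(s)$ as the $\pi_s$-expectation of the gate $w\in[0,1]$, and in the deterministic case both note that all mass sits on $v^*$, where $w=(1-1)^\lambda=0$. Your termwise sum and your remark that $\lambda>0$ gives $0^\lambda=0$ just make explicit what the paper states in probabilistic language.
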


\begin{proof}
Since $w(y\mid s)\in[0,1]$ for all $y$, its expectation under any distribution lies in $[0,1]$, hence $S(s)\in[0,1]$. If $\pi_s$ is deterministic, then with probability one the sampled token has $p=\pi_s(y\mid s)=1$, so $w=(1-1)^\lambda=0$ almost surely, yielding $S(s)=0$.
\end{proof}

\begin{lemma}[Coverage lower bound via non-saturated mass]
\label{lem:S_lower}
Fix $\delta\in(0,1]$ and define the non-saturated set
\begin{equation}
    A_\delta(s) := \{y\in\mathcal{V}:\ \pi_s(y\mid s)\le 1-\delta\}.
\end{equation}
Then
\begin{equation}
    S(s) \ge \delta^\lambda \cdot \pi_s\big(A_\delta(s)\mid s\big).
    \label{eq:app_S_lower}
\end{equation}
\end{lemma}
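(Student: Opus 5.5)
The plan is a direct term-dropping argument on the sum defining $S(s)$ in Eq.~\eqref{eq:app_S}. Every summand $\pi_s(v\mid s)\bigl(1-\pi_s(v\mid s)\bigr)^\lambda$ is nonnegative, because $\pi_s(v\mid s)\in[0,1]$ and $\lambda>0$. So I will split the vocabulary into $A_\delta(s)$ and its complement $\mathcal{V}\setminus A_\delta(s)$. I then discard the complement's contribution, which can only decrease the sum:
\begin{equation*}
    S(s) \;\ge\; \sum_{v\in A_\delta(s)} \pi_s(v\mid s)\bigl(1-\pi_s(v\mid s)\bigr)^\lambda .
\end{equation*}

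Next I bound the gate pointwise on $A_\delta(s)$. For $v\in A_\delta(s)$ we have $\pi_s(v\mid s)\le 1-\delta$, hence $1-\pi_s(v\mid s)\ge\delta>0$. The map $t\mapsto t^\lambda$ is nondecreasing on $[0,\infty)$ for $\lambda>0$, so $\bigl(1-\pi_s(v\mid s)\bigr)^\lambda\ge\delta^\lambda$. This is the same monotonicity step used for Eq.~\eqref{eq:cp_lower} in the main text. Substituting this bound into the restricted sum and factoring out $\delta^\lambda$ leaves $\delta^\lambda\sum_{v\in A_\delta(s)}\pi_s(v\mid s)=\delta^\lambda\,\pi_s\bigl(A_\delta(s)\mid s\bigr)$. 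That is exactly Eq.~\eqref{eq:app_S_lower}.

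There is no real obstacle here. The only points needing care are the nonnegativity of the discarded terms, which justifies the first inequality, and the monotonicity of $t^\lambda$, which needs $\lambda>0$ as assumed in Eq.~\eqref{eq:gate}.

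As a sanity check, I will note consistency with Lemma~\ref{lem:S_range}. If $\pi_s(\cdot\mid s)$ is deterministic, then $A_\delta(s)$ has $\pi_s$-mass zero for every $\delta>0$, so the bound gives $0$, in agreement with $S(s)=0$.
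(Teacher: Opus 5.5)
Your proposal is correct and is essentially the paper's argument. The paper writes the same term-dropping step as $\mathbb{E}_{y\sim\pi_s}[w(y\mid s)]\ge\mathbb{E}_{y\sim\pi_s}[w(y\mid s)\,\mathbf{1}\{y\in A_\delta(s)\}]$ (using $w\ge 0$) and then applies the pointwise bound $w\ge\delta^\lambda$ on $A_\delta(s)$, which is your restricted sum followed by the same monotonicity bound.
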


\begin{proof}
For any $y\in A_\delta(s)$, we have $1-\pi_s(y\mid s)\ge \delta$, hence $w(y\mid s)\ge \delta^\lambda$. Therefore,
\begin{align}
S(s) &= \mathbb{E}_{y\sim\pi_s}[w(y\mid s)] \nonumber\\
&\ge \mathbb{E}_{y\sim\pi_s}\big[w(y\mid s)\,\mathbf{1}\{y\in A_\delta(s)\}\big] \nonumber\\
&\ge \mathbb{E}_{y\sim\pi_s}\big[\delta^\lambda\,\mathbf{1}\{y\in A_\delta(s)\}\big] \nonumber\\
&= \delta^\lambda \pi_s(A_\delta(s)\mid s).
\end{align}
\end{proof}
Eq.~\eqref{eq:app_S_lower} shows that $S(s)$ is large whenever $\pi_s(\cdot\mid s)$ assigns substantial probability mass to non-saturated alternatives. In such contexts, the gate is less likely to shut off the information-gain term, so the $\phi$-based correction can apply to a broader portion of likely tokens under $\pi_s$. This provides a quantitative characterization of how TCER maintains active correction signals in high-entropy regimes.

\subsection{Closed-form derivative of the gated correction}
\label{app:correction_derivative}
For completeness, we provide the derivative of $C(p)$ with respect to $p$, which can be useful for analyzing smoothness:
\begin{equation}
    C(p)=(1-p)^{\lambda}\log\frac{p+\varepsilon}{q+\varepsilon}.
\end{equation}
A direct differentiation gives
\begin{equation}
\small
    \frac{d}{dp}C(p)
    =
    -\lambda(1-p)^{\lambda-1}\log\frac{p+\varepsilon}{q+\varepsilon}
    +
    \frac{(1-p)^{\lambda}}{p+\varepsilon}.
\end{equation}
Under $\varepsilon>0$ and $p\in[0,1]$, both $\log\frac{p+\varepsilon}{q+\varepsilon}$ and $(p+\varepsilon)^{-1}$ are bounded; thus the derivative remains bounded for $\lambda\ge 1$ and exhibits the expected suppression near $p\approx 1$ when $\lambda>1$.

While the derivative vanishes as $p \to 1$, suggesting diminishing correction for high-confidence tokens, the entropy-preserving effect of TCER operates through the sequence-averaged reward structure. The sequence reward is computed as:
\begin{equation}
    r_{\text{t}}(\tau) = \frac{1}{T}\sum_{t=1}^{T} \left[\log p_t + k \cdot C(p_t)\right]
\end{equation}
This averaged reward is then applied uniformly to all tokens in the sequence during gradient updates. 

To quantify the diversity preference, consider the expected reward difference between diverse and deterministic sequences. For a deterministic sequence where all $p_t \approx 1-\epsilon$ for small $\epsilon$:
\begin{equation}
\small
    r_{\text{det}} \approx \log(1-\epsilon) + k \cdot \epsilon^\lambda \cdot \log\frac{1-\epsilon+\varepsilon}{q+\varepsilon} \approx -\epsilon
\end{equation}
For a diverse sequence with confidence variance $\sigma^2 = \text{Var}_t[p_t]$, containing tokens with $p_t \in [1-2\sigma, 1]$:
\begin{equation}
\small
    r_{\text{diverse}} \geq \mathbb{E}[\log p_t] + k \cdot \mathbb{E}[(1-p_t)^\lambda \cdot \log\frac{p_t+\varepsilon}{q_t+\varepsilon}]
\end{equation}
The second term is substantial for medium-confidence tokens. Using the lower bound from Eq.~\eqref{eq:cp_lower}, tokens with $p_t \leq 1-\sigma$ contribute at least:
\begin{equation}
    k \cdot \sigma^\lambda \left|\log\frac{p_t+\varepsilon}{q_t+\varepsilon}\right|
\end{equation}
Therefore, the reward advantage of diverse sequences is:
\begin{equation}
\small
    \Delta r = r_{\text{diverse}} - r_{\text{det}} \geq k \cdot \sigma^{\lambda+1} \cdot \mathbb{E}\left[\left|\log\frac{p+\varepsilon}{q+\varepsilon}\right|\right] > 0
\end{equation}

This quantifies how the gating mechanism creates systematic preference for diversity: sequences with higher confidence variance receive higher average rewards, leading the policy to learn diverse outputs rather than collapse to deterministic patterns. The parameter $k$ controls the strength of this diversity preference, with larger $k$ providing stronger protection against entropy collapse.

\section{Experimental Details}

\subsection{Implementation Details}
Across all tasks, we adopt a unified training pipeline consisting of SFT followed by RL with Reference-Augmented GRPO. For writing, we experiment with Qwen2.5-7B-Instruct, Qwen3-8B, and Llama3.1-8B-Instruct; for mathematical reasoning, we use Qwen2.5-Math-7B and Llama3.1-8B-Instruct. On writing datasets (DeepWriter and LongWriter), we perform SFT for 1 epoch with learning rate $2\times 10^{-5}$ and batch size 8, and then run RL for 1{,}000 update steps. On math tasks, we run SFT for 3 epochs and RL for 500 update steps with learning rate $1\times 10^{-5}$ and batch size 64.

During RL, for each prompt we sample a group of $G=\text{8}$ rollouts and augment it with the dataset reference completion, using the length-normalized average token reward as the sequence reward. Following the theoretical requirement from EndoR that reward models should remain frozen during RL training to prevent distribution shift, we keep both $\pi_{\text{s}}$ and $\pi_{\text{b}}$ fixed throughout optimization. GRPO uses a KL penalty of coefficient $\beta=\text{0.001}$. TCER is computed using a correction scale $k=\text{3}$ and gating exponent $\lambda=\text{2}$, with $\varepsilon$-smoothing set to $\varepsilon=1\times 10^{-5}$ in the log-ratio for numerical stability. For generation during RL training, we use temperature $\tau=0.7$ and maximum length of 8000 tokens. All training data and evaluation benchmarks employ their original open-source prompts without modification. We will release our code upon acceptance to facilitate reproducibility. All experiments are run on a cluster of $16\times$ NVIDIA H20 GPUs; to reduce RL-time overhead, the actor runs on the main training GPUs while the frozen $\pi_{\text{s}}$ and $\pi_{\text{b}}$ are served on an additional $2\times$ H20 GPUs for reward computation.

\begin{figure*}[!t]
    \centering
    % ================= Row 1: DeepWriting =================
    \begin{subfigure}[t]{0.32\textwidth}
        \centering
        \includegraphics[width=\linewidth]{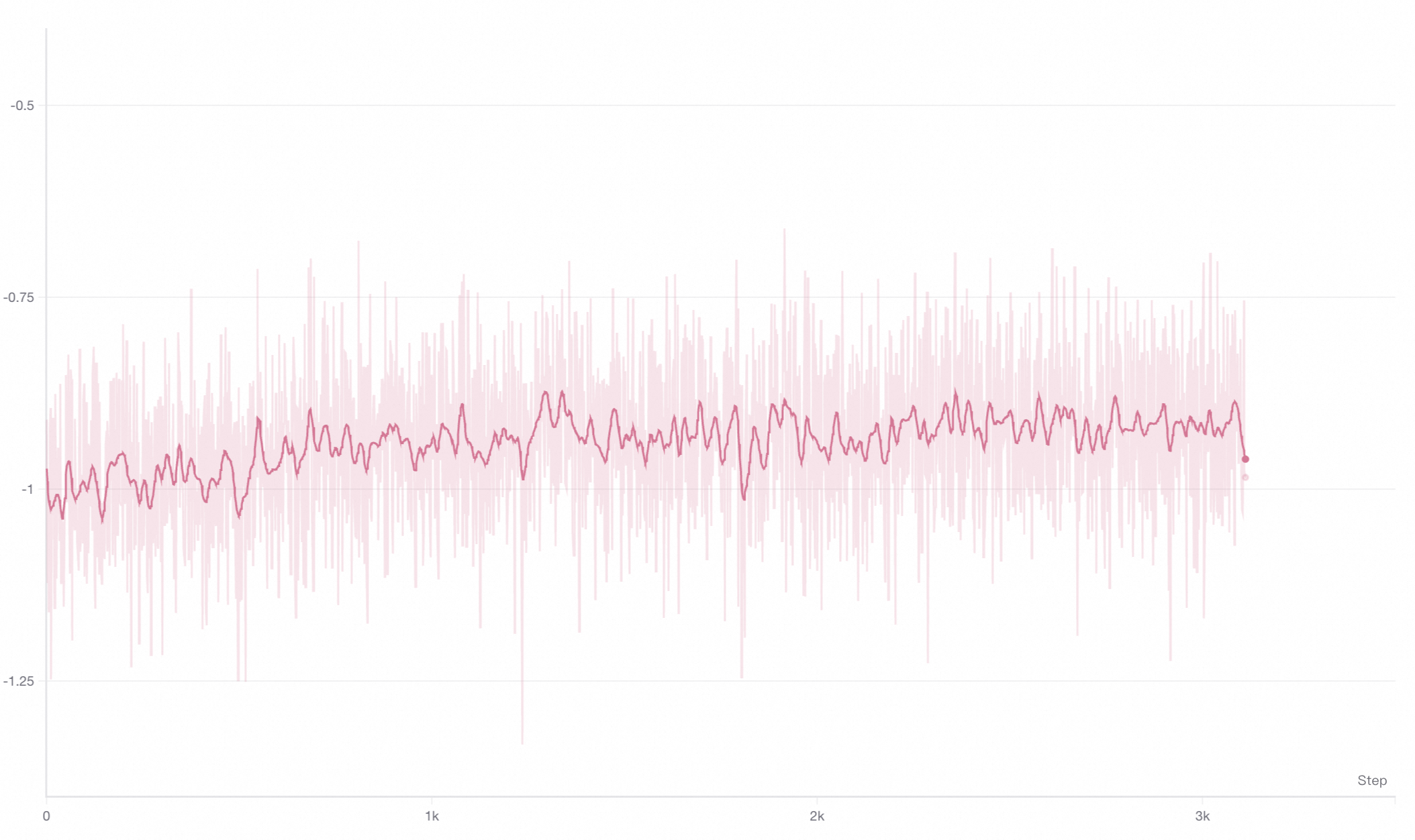}
        \caption{DeepWriting: EndoR reward.}
    \end{subfigure}\hfill
    \begin{subfigure}[t]{0.32\textwidth}
        \centering
        \includegraphics[width=\linewidth]{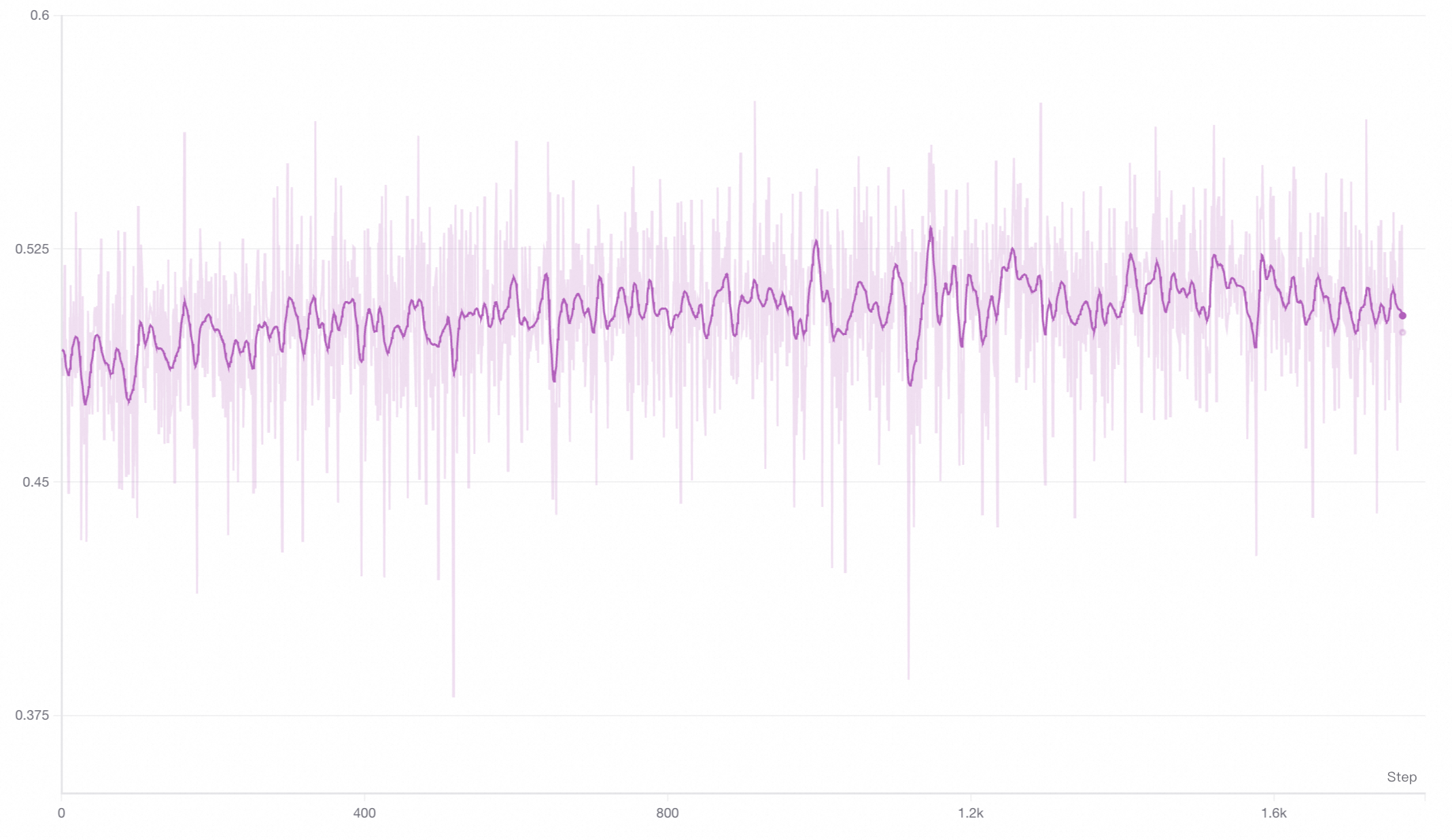}
        \caption{DeepWriting: TCER reward.}
    \end{subfigure}\hfill
    \begin{subfigure}[t]{0.32\textwidth}
        \centering
        \includegraphics[width=\linewidth]{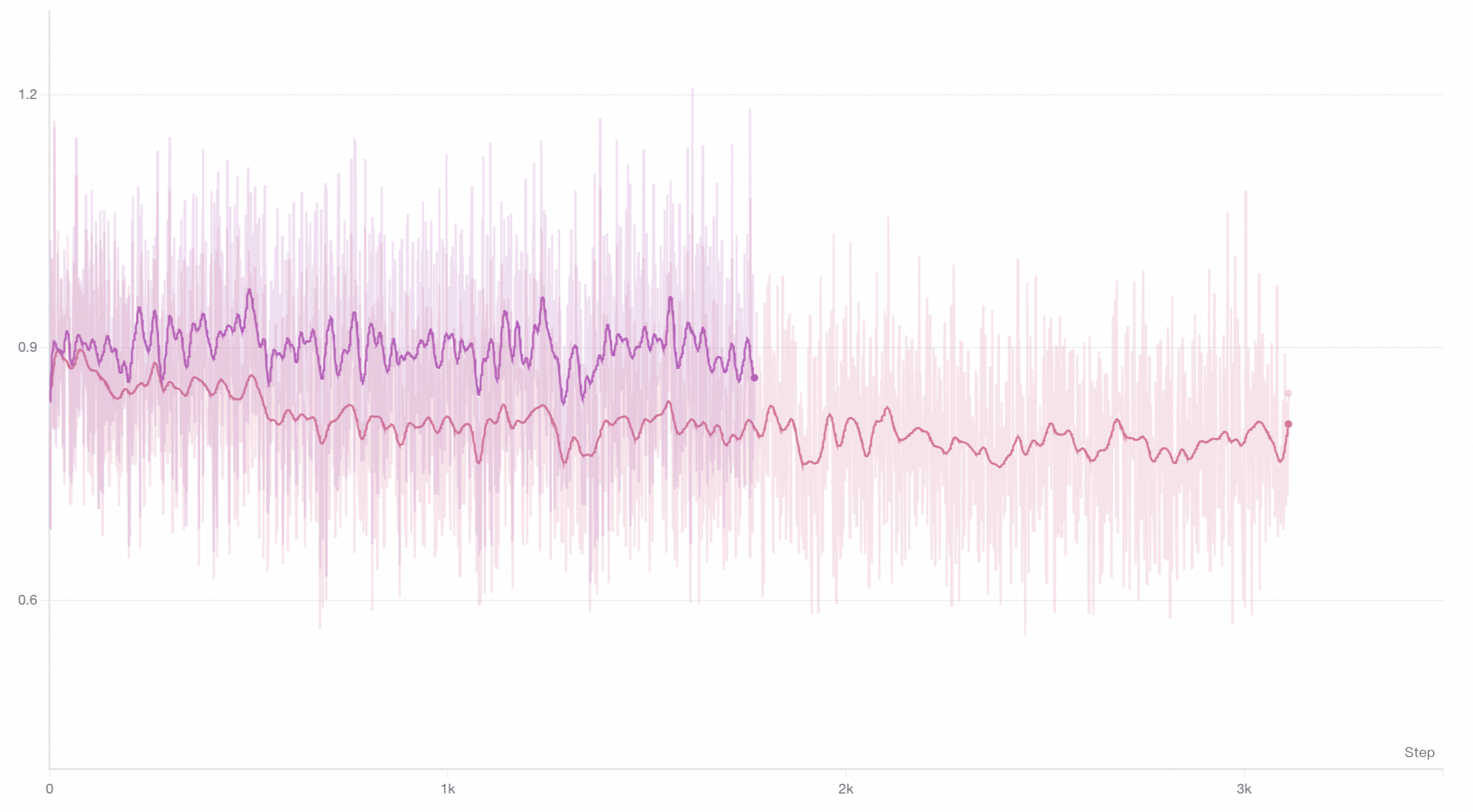}
        \caption{DeepWriting: entropy (EndoR vs TCER).}
    \end{subfigure}

    \vspace{0.5em}

    % ================= Row 2: LongWriter ==================
    \begin{subfigure}[t]{0.32\textwidth}
        \centering
        \includegraphics[width=\linewidth]{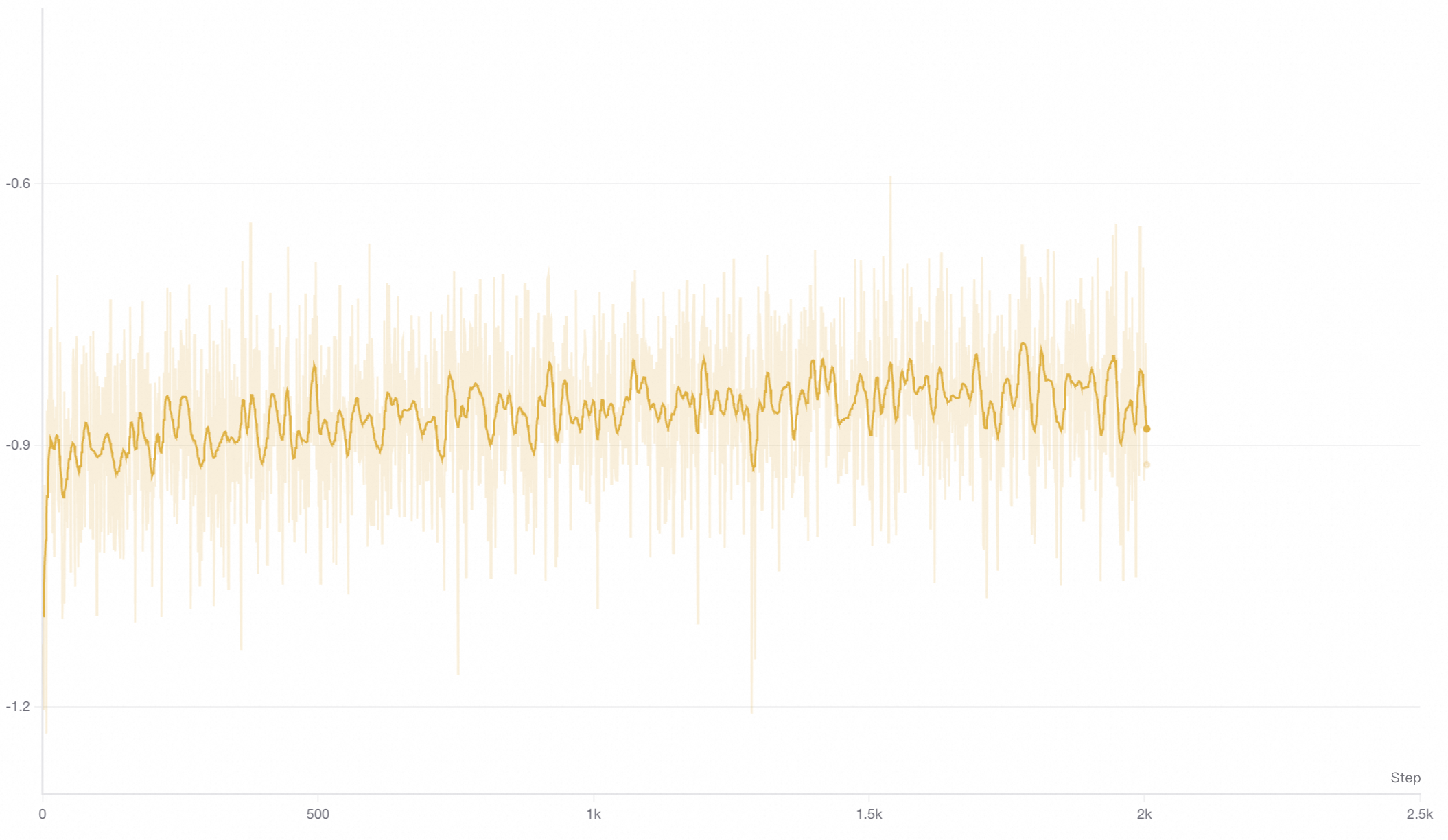}
        \caption{LongWriter: EndoR reward.}
    \end{subfigure}\hfill
    \begin{subfigure}[t]{0.32\textwidth}
        \centering
        \includegraphics[width=\linewidth]{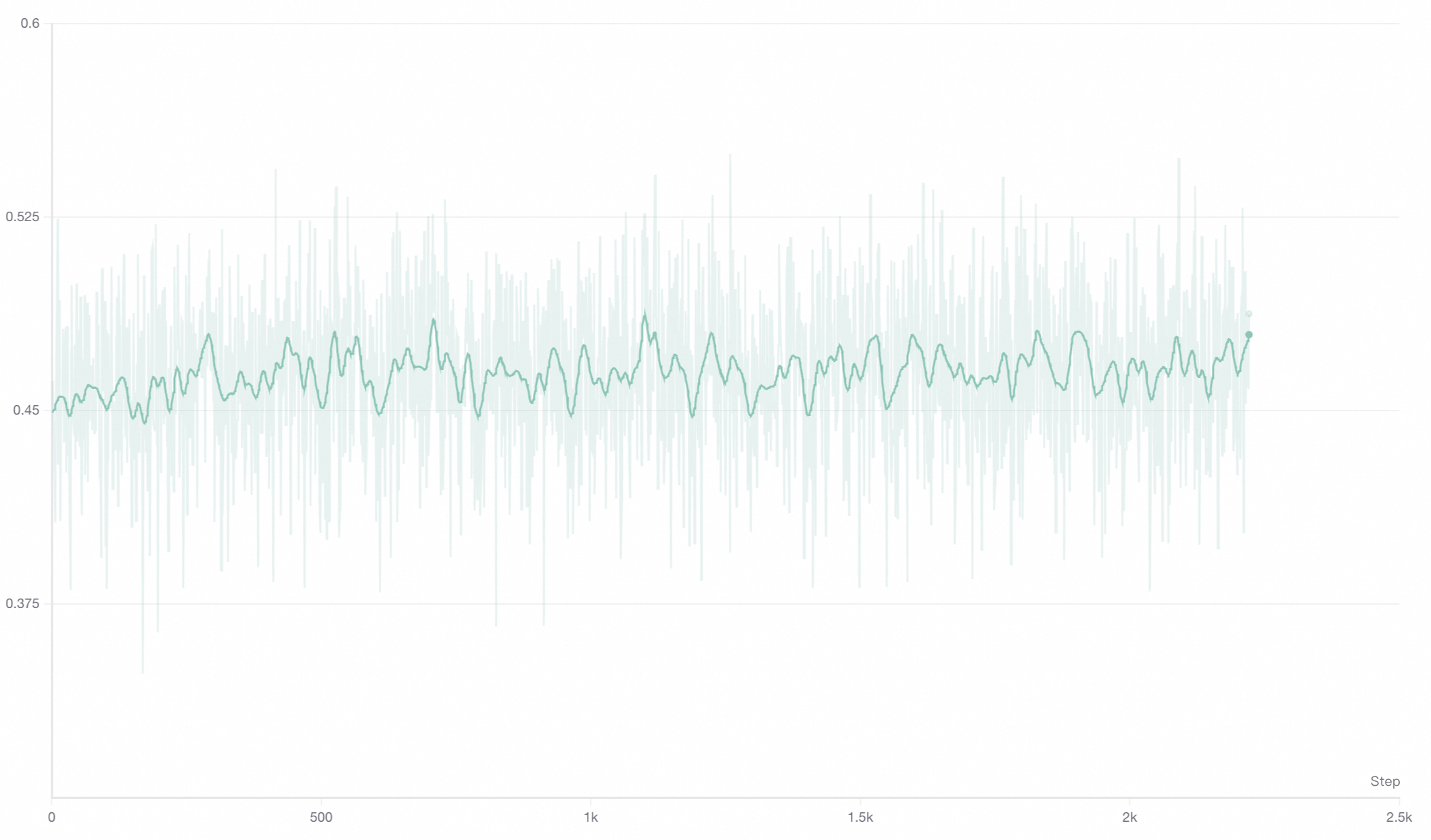}
        \caption{LongWriter: TCER reward.}
    \end{subfigure}\hfill
    \begin{subfigure}[t]{0.32\textwidth}
        \centering
        \includegraphics[width=\linewidth]{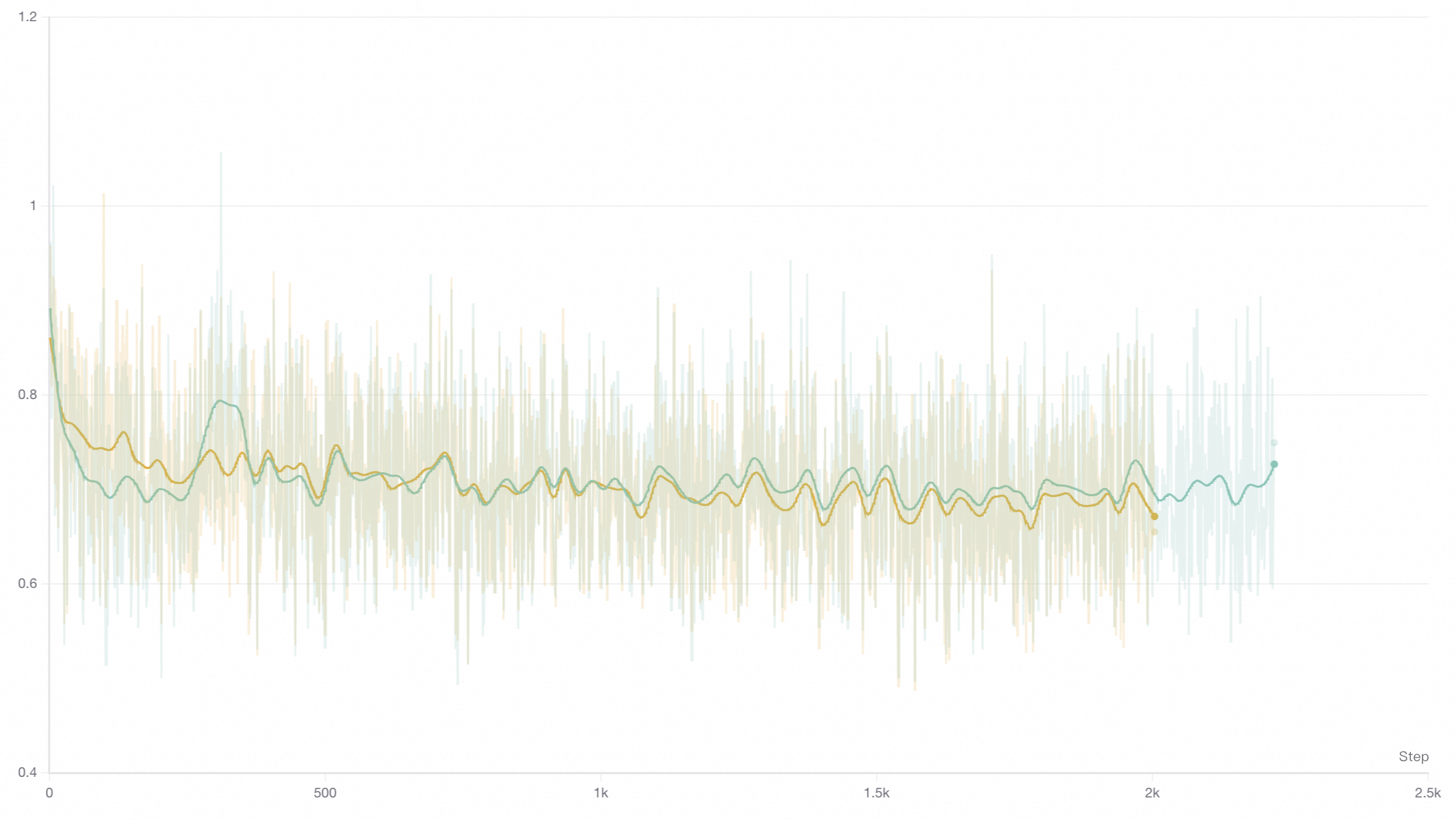}
        \caption{LongWriter: entropy (EndoR vs TCER).}
    \end{subfigure}

    \caption{RL training dynamics on writing tasks. For each dataset, we report the EndoR reward trajectory, the TCER reward trajectory , and a direct entropy comparison between EndoR and TCER .}
    \label{fig:rl_dynamics_writing_grid}
\end{figure*}

\begin{figure*}[!t]
    \centering
    % ================= Row 1: Qwen-Math ===================
    \begin{subfigure}[t]{0.32\textwidth}
        \centering
        \includegraphics[width=\linewidth]{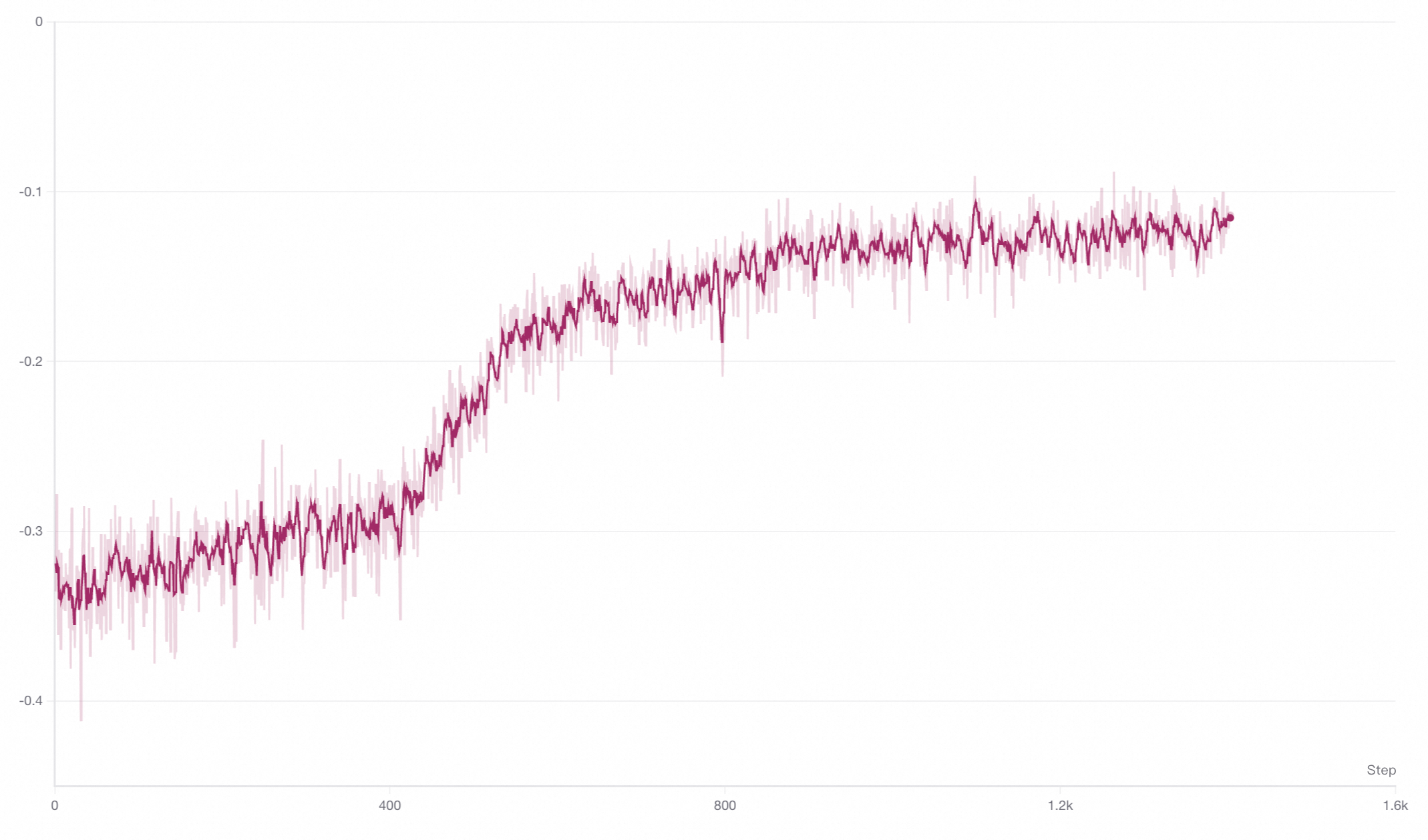}
        \caption{Qwen2.5-Math: EndoR reward.}
    \end{subfigure}\hfill
    \begin{subfigure}[t]{0.32\textwidth}
        \centering
        \includegraphics[width=\linewidth]{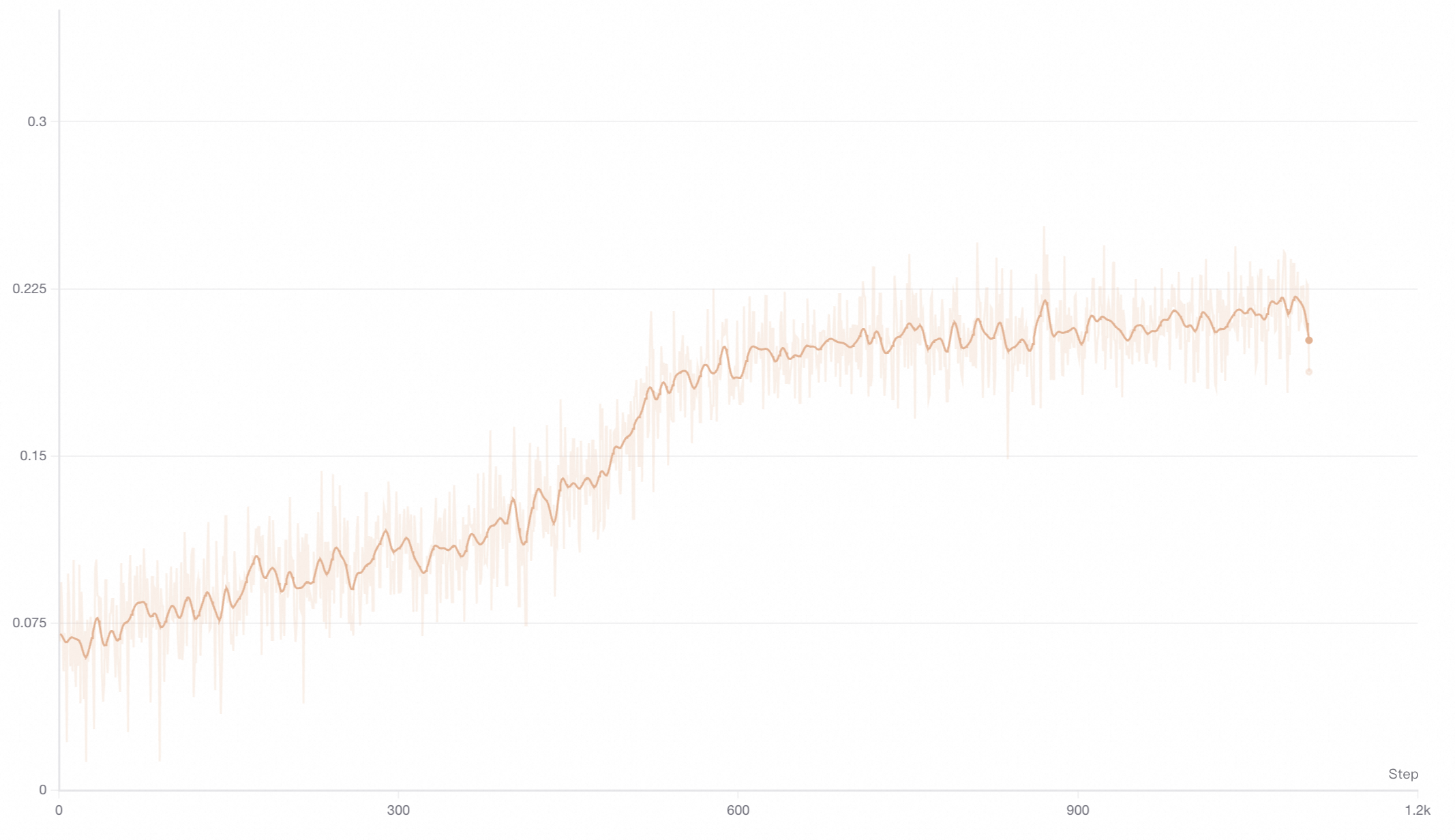}
        \caption{Qwen2.5-Math: TCER reward.}
    \end{subfigure}\hfill
    \begin{subfigure}[t]{0.32\textwidth}
        \centering
        \includegraphics[width=\linewidth]{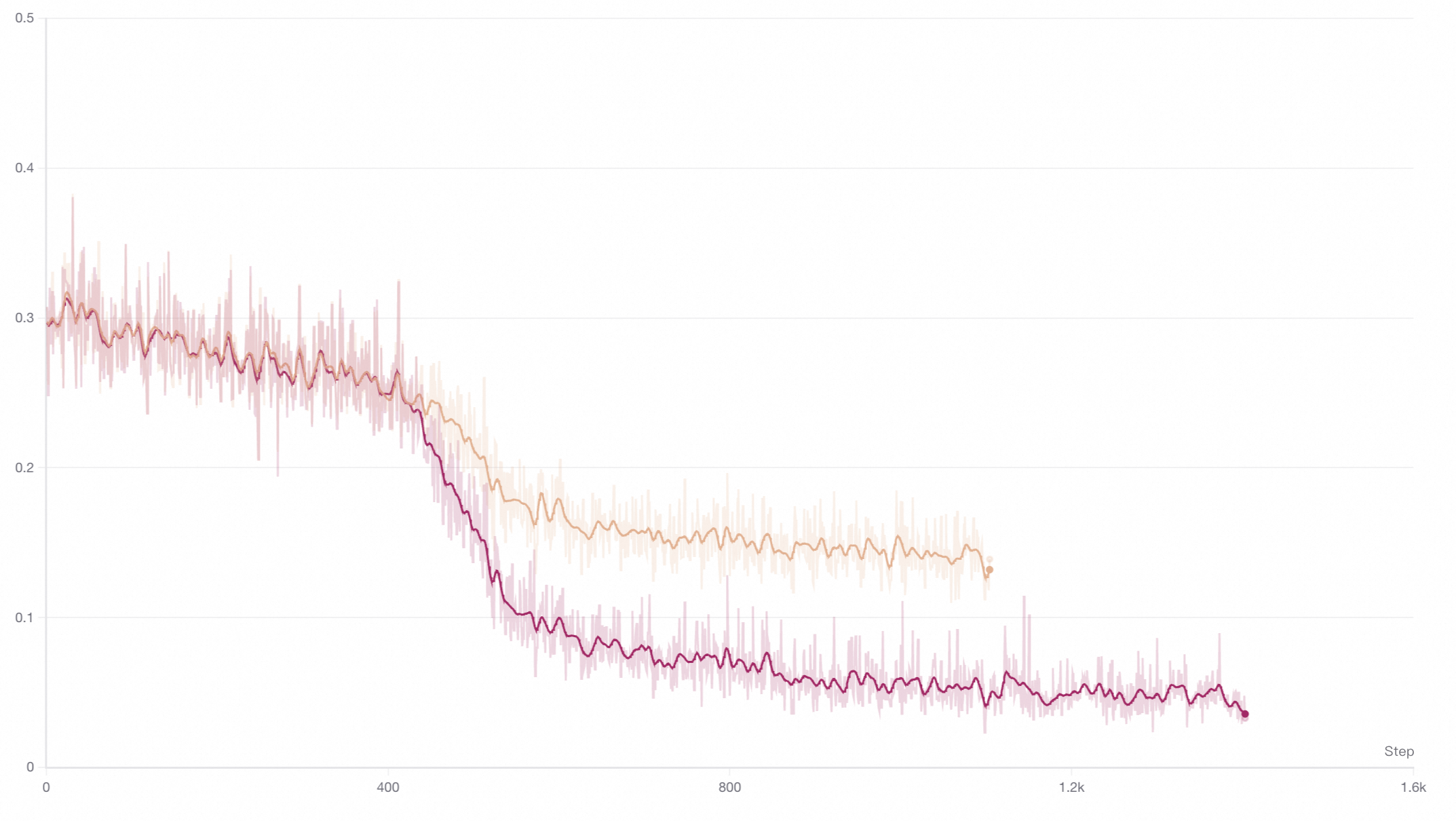}
        \caption{Qwen2.5-Math: entropy (EndoR vs TCER).}
    \end{subfigure}

    \vspace{0.5em}

    % ================= Row 2: Llama-Math ==================
    \begin{subfigure}[t]{0.32\textwidth}
        \centering
        \includegraphics[width=\linewidth]{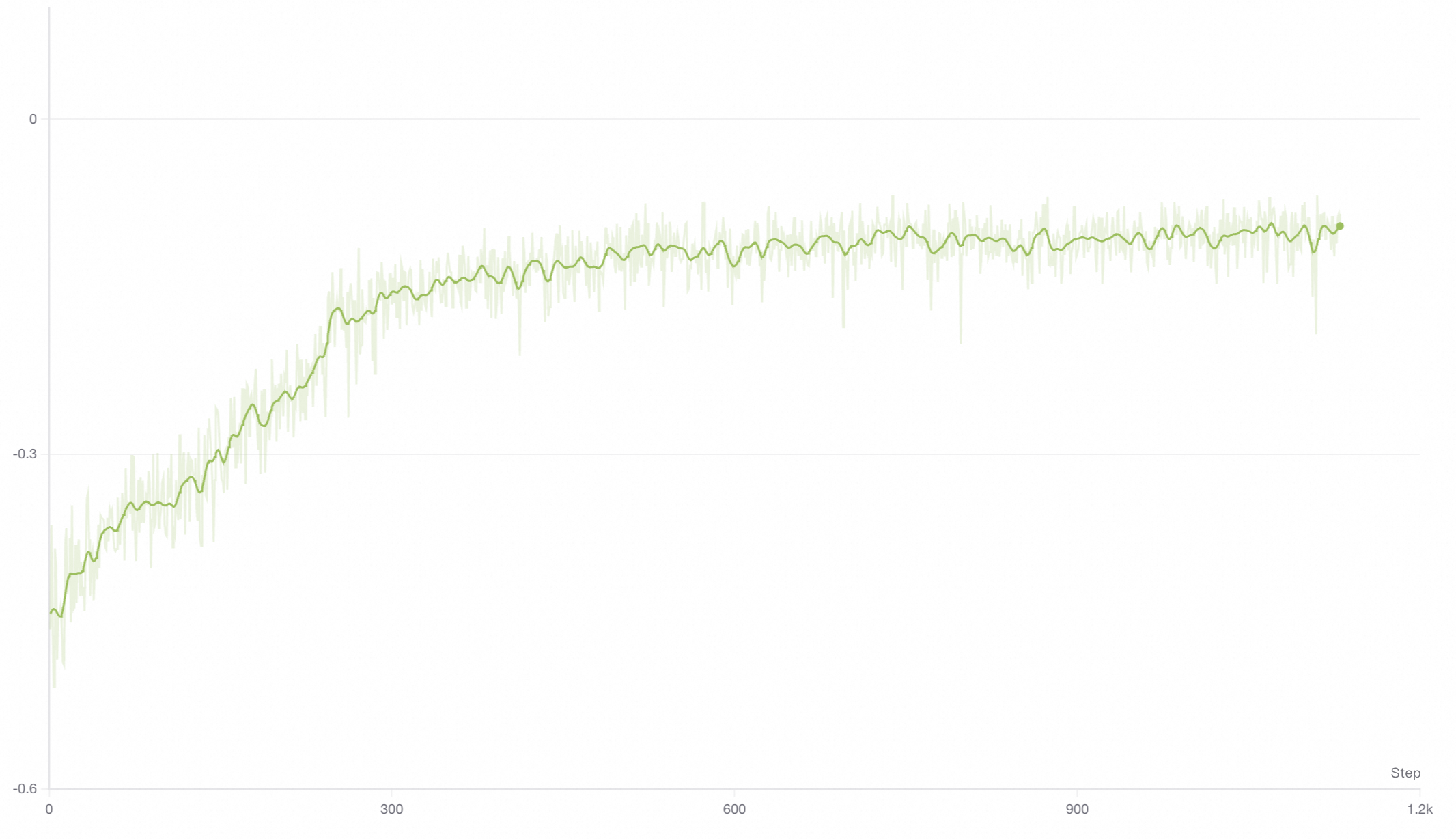}
        \caption{Llama3.1: EndoR reward.}
    \end{subfigure}\hfill
    \begin{subfigure}[t]{0.32\textwidth}
        \centering
        \includegraphics[width=\linewidth]{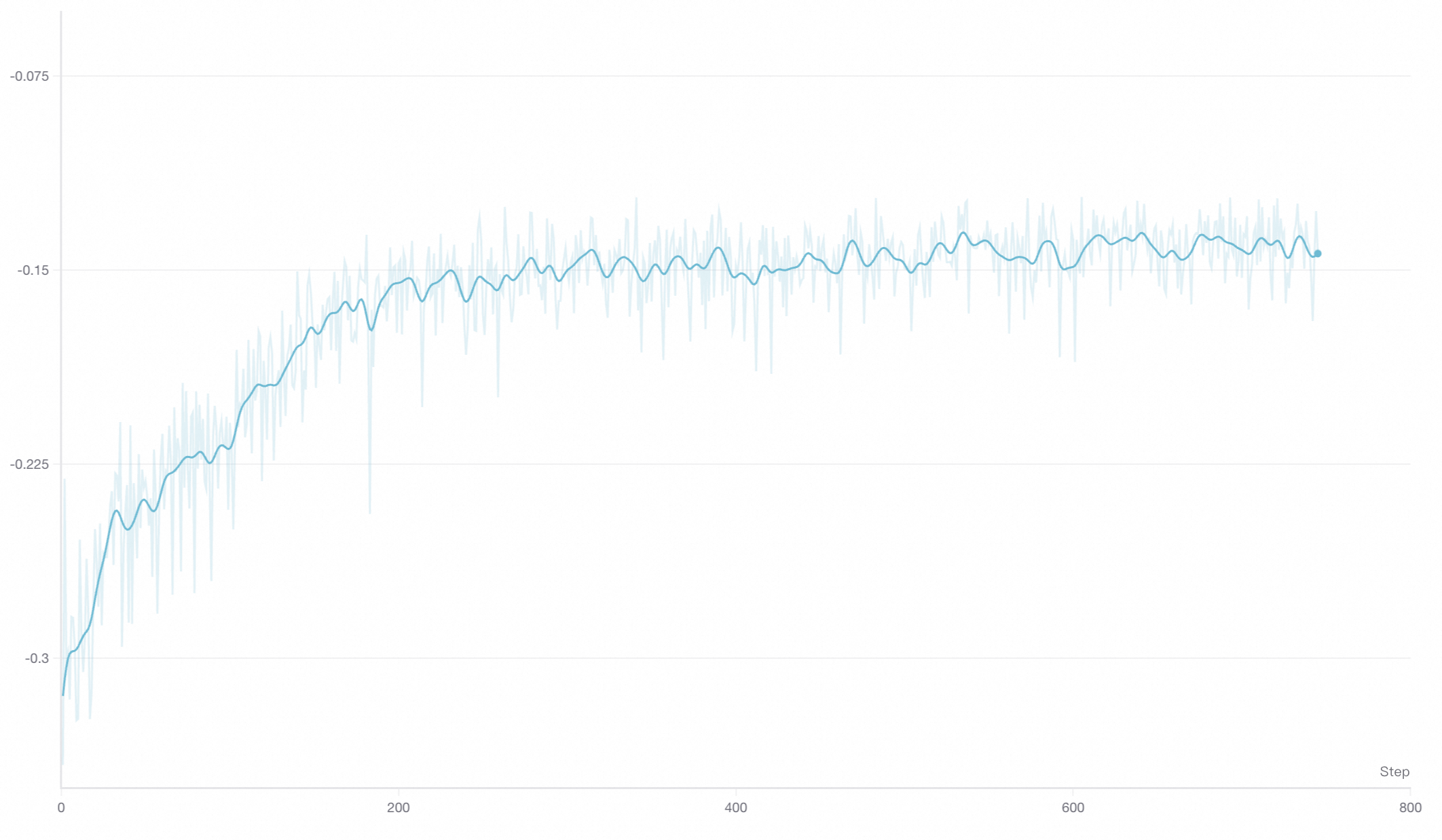}
        \caption{Llama3.1: TCER reward.}
    \end{subfigure}\hfill
    \begin{subfigure}[t]{0.32\textwidth}
        \centering
        \includegraphics[width=\linewidth]{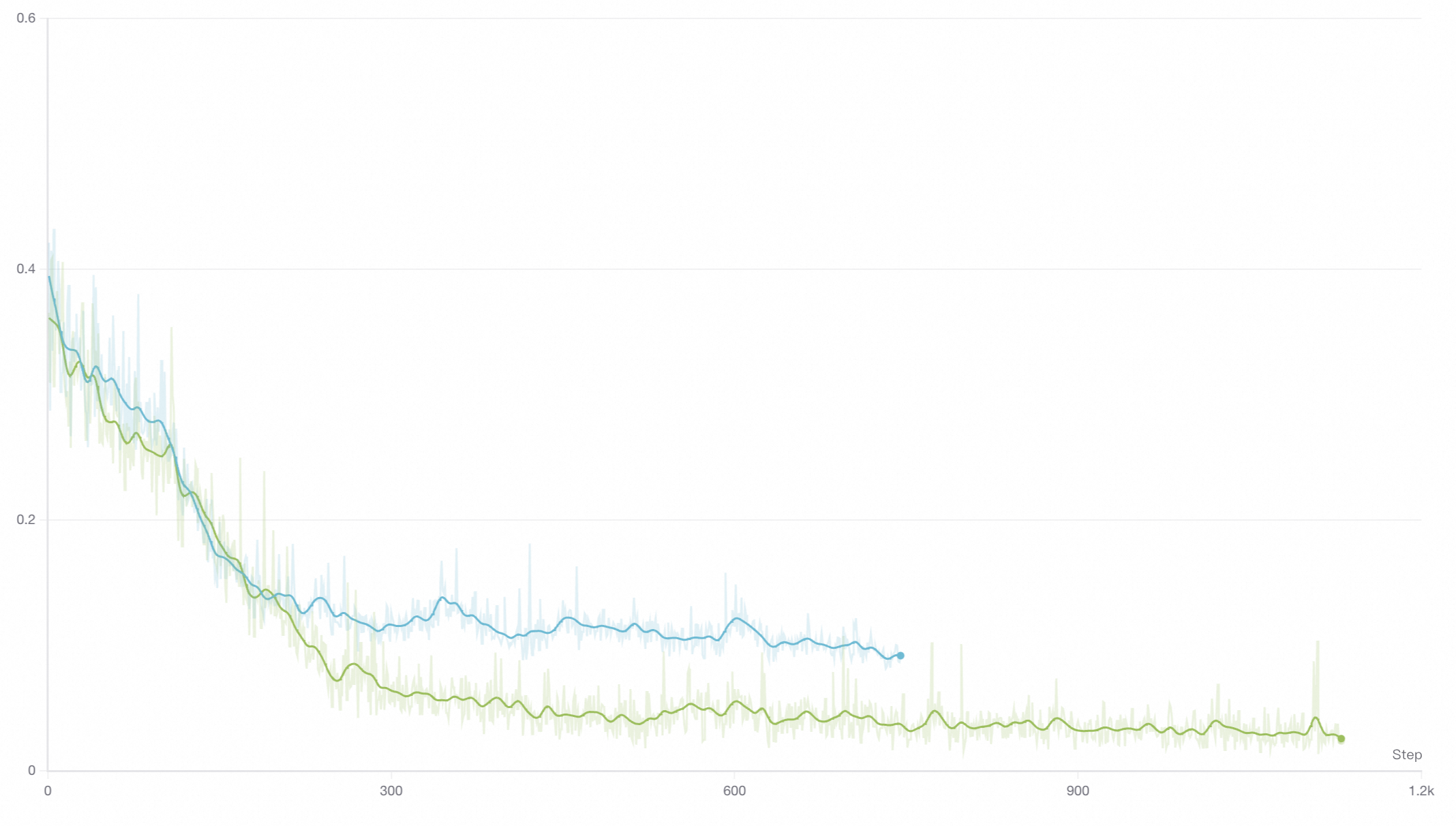}
        \caption{Llama3.1: entropy (EndoR vs TCER).}
    \end{subfigure}

    \caption{RL training dynamics on mathematical reasoning tasks. We report EndoR and TCER reward trajectories together with entropy comparisons for each model configuration.}
    \label{fig:rl_dynamics_math_grid}
\end{figure*}

\subsection{Training Details}
During RL fine-tuning, we log training-time diagnostics to characterize how different endogenous rewards shape policy behavior and to diagnose degeneration. Specifically, we track (i) the sequence-level reward (computed from token-level rewards via length-normalized averaging, consistent with Eq.~(16)) and (ii) the actor's token-level entropy over generated positions. For each cionfiguration, we report EndoR reward trajectories using the \textit{EndoR reward} logs and TCER reward trajectories using the \textit{TCER reward} logs; the corresponding entropy comparison between EndoR and TCER is plotted jointly in the \textit{EndoR-vs-TCER entropy} figures. Figures~\ref{fig:rl_dynamics_writing_grid} and~\ref{fig:rl_dynamics_math_grid} summarize these dynamics for writing and mathematical reasoning, respectively.

Across both writing and mathematical reasoning tasks, EndoR exhibits a consistent tendency toward entropy reduction as optimization proceeds, indicating a drift toward ilow-entropy, high-confidence continuations. While this behavior can increase reward, it aligns with the degeneration mechanism discussed in Section~3.1, where confidence maximization encourages concentration on saturated high-probability tokens. In contrast, TCER achieves comparable or higher reward with improved stability, while consistently exhibiting slower entropy decay and higher entropy levels than EndoR throughout training. This suggests that the triviality-corrected information-gain term mitigates entropy collapse by reallocating learning signal away from saturated high-probability tokens and toward more informative alternatives, leading to more balanced RL optimization dynamics.

\subsection{Hyperparameter Details}
We study the sensitivity of TCER to its two hyperparameters: the correction scale $k$ and the gating exponent $\lambda$. Table~\ref{tab:hyperparameter} reports results on the DeepWriter dataset with Qwen2.5-7B-Instruct, where we sweep a small set of representative $(k,\lambda)$ values.

Overall, the method is stable across the tested range. Increasing $k$ from 1 to 3 consistently improves performance, and $(k=3,\lambda=2)$ achieves the best scores on all benchmarks. Further increasing $k$ to 4 yields slightly lower results, suggesting diminishing returns when the correction becomes overly strong. Fixing $k=3$, varying $\lambda$ in $\{1,2,3\}$ produces only minor differences, with $\lambda=2$ consistently performing best. Based on this analysis, we use $(k=3,\lambda=2)$ as the default setting in our main experiments unless stated otherwise.

\begin{table*}[ht!]
\centering
\renewcommand{\arraystretch}{1.2}
\setlength{\tabcolsep}{2.5mm}
\begin{tabular}{lccccccccc}
\hline
\textbf{Configuration} & \textbf{LB} & \textbf{HB-A} & \textbf{HB-B} & \textbf{WB-A} & \textbf{WB-B} & \textbf{WB-C} & \textbf{WB-D} & \textbf{WB-E} & \textbf{WB-F} \\
\hline
$(k=1, \lambda=1)$ & 83.7 & 78.5 & 83.9 & 70.8 & 71.2 & 68.6 & 65.3 & 71.5 & 69.0 \\
$(k=2, \lambda=2)$ & 85.1 & 79.8 & 85.2 & 71.9 & 72.4 & 69.7 & 66.5 & 72.6 & 70.2 \\
\textbf{$(k=3, \lambda=2)$} & \textbf{86.3} & \textbf{81.1} & \textbf{86.4} & \textbf{72.7} & \textbf{73.1} & \textbf{70.8} & \textbf{67.8} & \textbf{73.4} & \textbf{71.1} \\
$(k=4, \lambda=2)$ & 85.9 & 80.7 & 86.0 & 72.4 & 72.8 & 70.4 & 67.3 & 73.0 & 70.7 \\
$(k=3, \lambda=1)$ & 85.4 & 80.2 & 85.6 & 72.1 & 72.5 & 70.1 & 67.0 & 72.8 & 70.4 \\
$(k=3, \lambda=3)$ & 85.8 & 80.6 & 85.9 & 72.3 & 72.7 & 70.3 & 67.2 & 72.9 & 70.6 \\
\hline
\end{tabular}
\caption{Hyperparameter sensitivity analysis on the DeepWriter dataset with Qwen2.5-7B-Instruct. Results show win rates or benchmark scores across LongBench (LB), HelloBench subsets (HB-A: Open QA, HB-B: Text Generation), and WritingBench domains (WB-A through WB-F representing different professional writing categories). The optimal configuration $(k=3, \lambda=2)$ achieves consistent improvements across all metrics.}
\label{tab:hyperparameter}
\vspace{-2mm}
\end{table*}

\section{Case Study}

\subsection{Reward Cases}
We present reward case studies from the \emph{Validation of Quality Enhancement} pipeline. For readability, we render the prompt, the SFT output, and the sentence-level reward table as screenshots (especially for Chinese content). Figures~\ref{fig:reward_case_prompt}--\ref{fig:reward_case_table} show (i) the writing-critique prompt, (ii) the corresponding SFT output, and (iii) the sentence-level reward comparison between EndoR and TCER for the same output.

For each line, the reward table reports EndoR and TCER scores, their difference $\Delta=\text{TCER}-\text{EndoR}$, and whether the line is highlighted by our closed-source judge panel (\textit{Selected=True}). Qualitatively, judge-highlighted lines tend to receive a more favorable reward difference under TCER than under EndoR, while many non-selected lines exhibit small or negative $\Delta$, particularly for structural markers or low-information segments. Notably, degenerate artifacts such as empty lines are heavily down-weighted by TCER in the English case, illustrating that the triviality correction can suppress non-informative content. Overall, these cases align with the role of TCER as a reweighting mechanism that shifts learning signal away from saturated or trivial tokens and toward more informative spans, improving the alignment between training reward and perceived writing quality.

\subsection{Writing Cases}
We present qualitative writing cases sampled from WritingBench, together with scores from our closed-source judge panel, and include both Chinese and English examples. For each case, we use the same prompt and compare responses generated by three models: the SFT policy, the EndoR-trained policy, and the TCER-trained policy. All prompts and outputs are shown as screenshots for consistency and readability.

Across the sampled cases, the SFT model produces fluent and on-topic responses with standard patterns. After RL with EndoR, responses can become more verbose and rely more on repetitive, high-confidence phrasing, which is associated with weaker judge scores in these examples. In contrast, the TCER-trained model typically maintains coherence while reducing repetition and improving expressive diversity, leading to higher judge scores in our displayed cases. These qualitative comparisons provide an intuitive complement to our quantitative results, illustrating how TCER mitigates degeneration tendencies induced by confidence-driven rewards and improves perceived writing quality.

\subsection{Prompt Cases}
We further present prompt cases illustrating how highlighted sentences are selected via a multi-judge protocol. Specifically, we use a shared sentence-selection prompt to query three proprietary LLM judges (GPT-4o, Claude Opus 4, and Gemini 2.5 Pro). Each judge independently selects a set of high-quality sentences from the same model output, with a minimum of five selections required to ensure sufficient coverage.

To obtain a stable final highlight set, we aggregate the three judge outputs using Gemini 2.5 Pro. The aggregation follows an agreement-based strategy: sentences selected by multiple judges are preferred, and if fewer than five sentences satisfy this criterion, additional candidates are included based on agreement strength until at least five highlighted sentences are obtained. This design reduces idiosyncratic bias from any single judge while preserving consistent quality signals. While using LLM judges may introduce evaluation biases, this multi-judge approach represents the current best practice for scalable quality assessment in the absence of large-scale human annotation.

Figures~\ref{fig:prompt_case_selection} and~\ref{fig:prompt_case_aggregator} show the sentence-selection prompt used by individual judges and the aggregation prompt used by Gemini 2.5 Pro, respectively. These prompt cases support our qualitative analyses by providing a transparent and reproducible mechanism for constructing highlighted sentence sets used in reward validation and writing case studies.

% -------- Reward Case: Prompt --------
\begin{figure*}[p]
    \centering
    \includegraphics[width=0.8\textwidth]{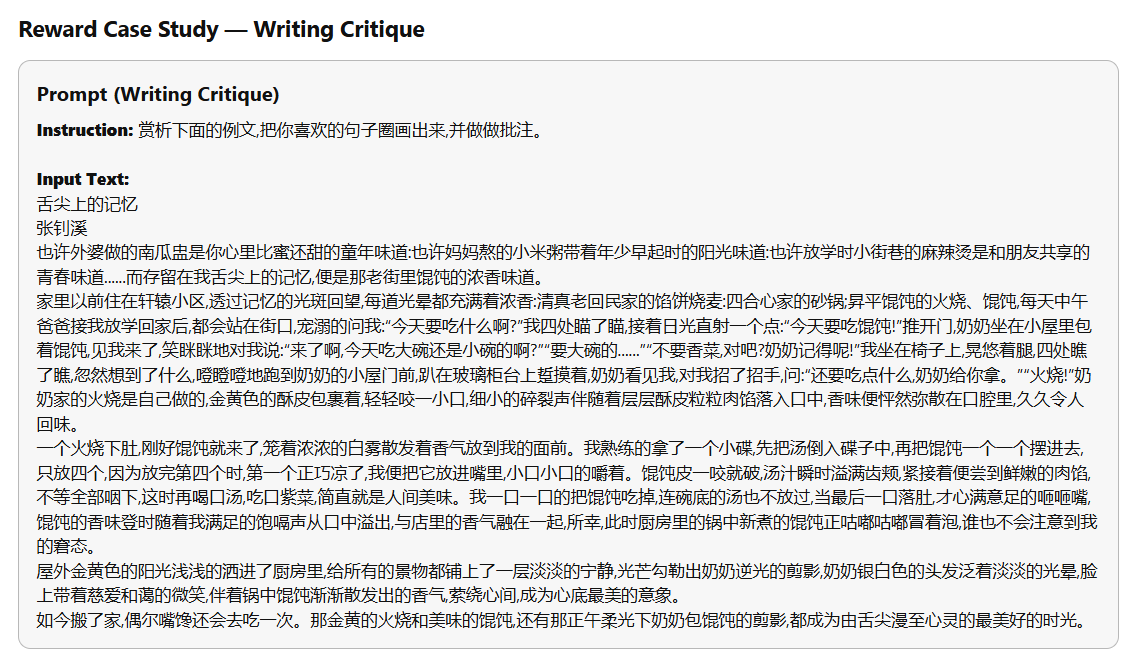}
    \caption{Reward case study (Prompt).}
    \label{fig:reward_case_prompt}
\end{figure*}

% -------- Reward Case: Output --------
\begin{figure*}[p]
    \centering
    \includegraphics[width=0.8\textwidth]{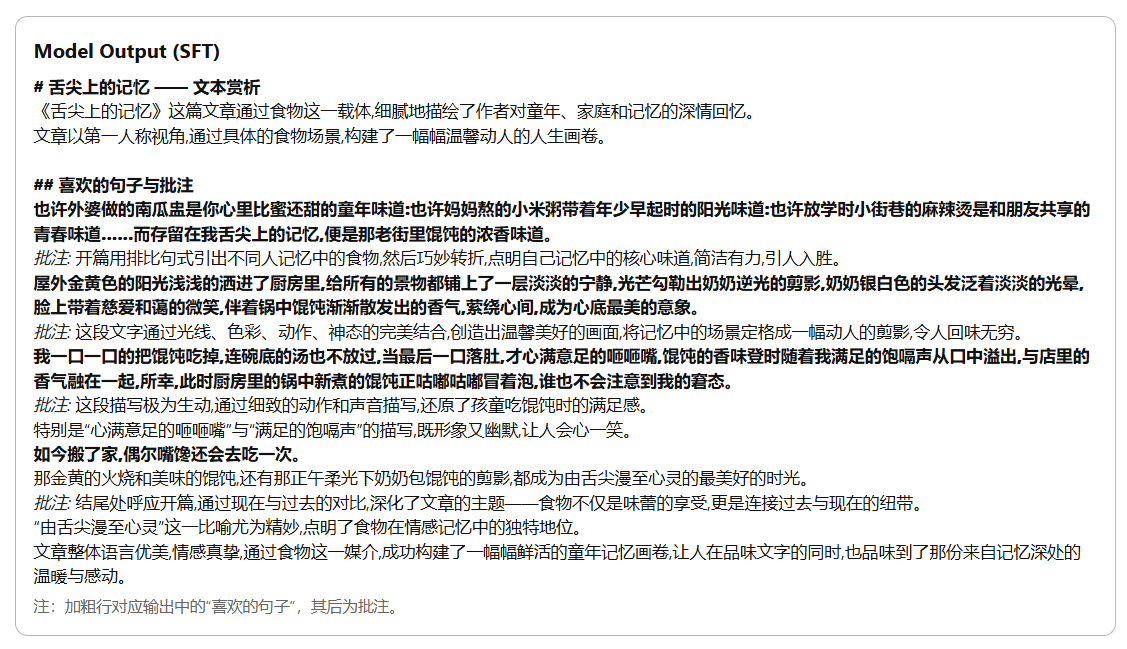}
    \caption{Reward case study (Model output from SFT).}
    \label{fig:reward_case_output}
\end{figure*}

% -------- Reward Case: Reward Table --------
\begin{figure*}[p]
    \centering
    \includegraphics[width=0.8\textwidth]{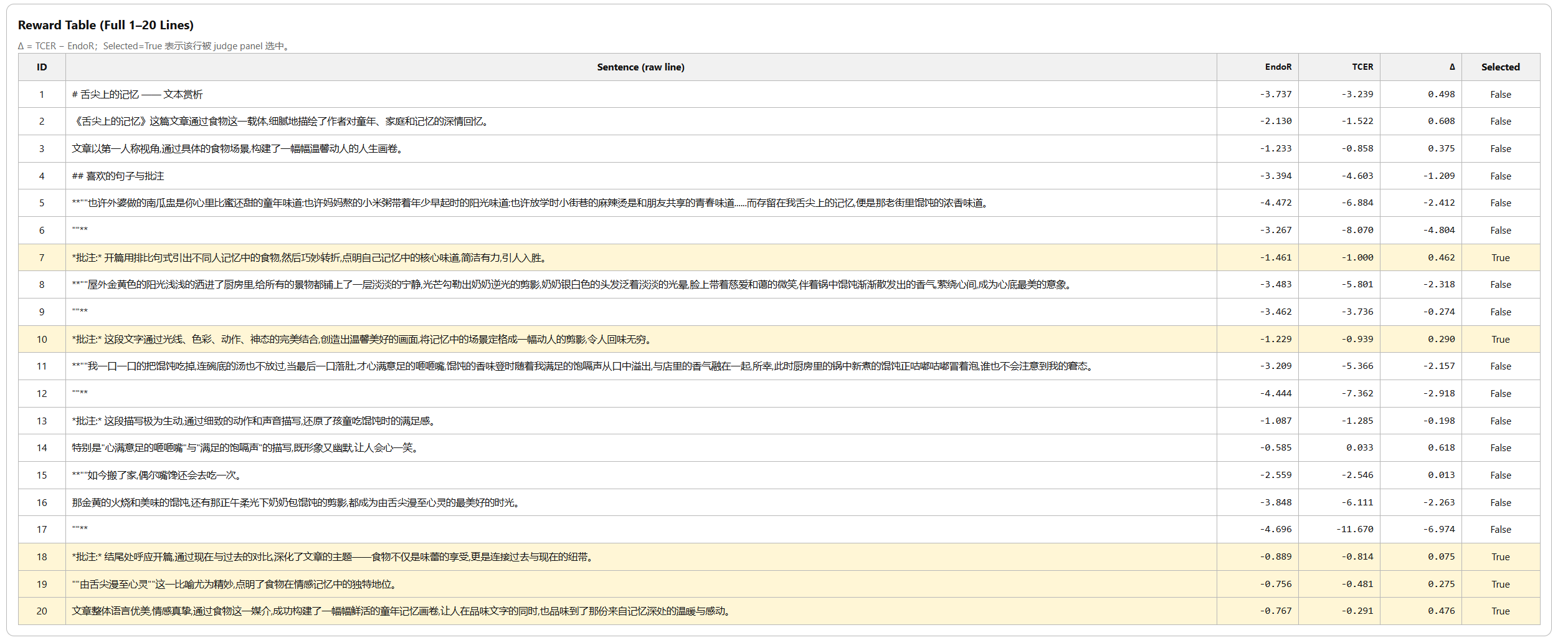}
    \caption{Reward case study. The table reports EndoR and TCER scores for each line, $\Delta=\text{TCER}-\text{EndoR}$, and whether the line is selected by the closed-source judge panel.}
    \label{fig:reward_case_table}
\end{figure*}

\begin{figure*}[p]
    \centering
    \includegraphics[width=0.95\textwidth]{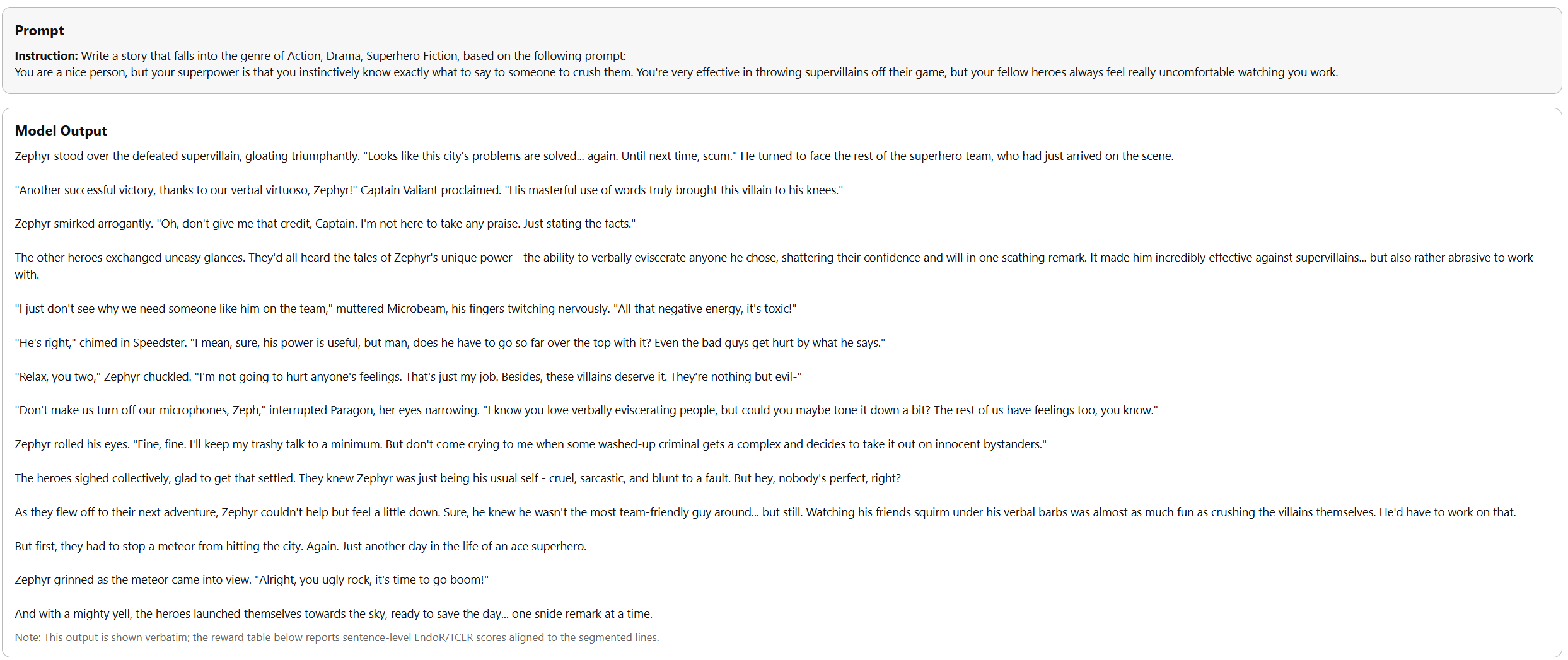}
    \caption{English reward case study: prompt and model output.}
    \label{fig:reward_case_en_prompt_output}
\end{figure*}

\begin{figure*}[p]
    \centering
    \includegraphics[width=0.95\textwidth]{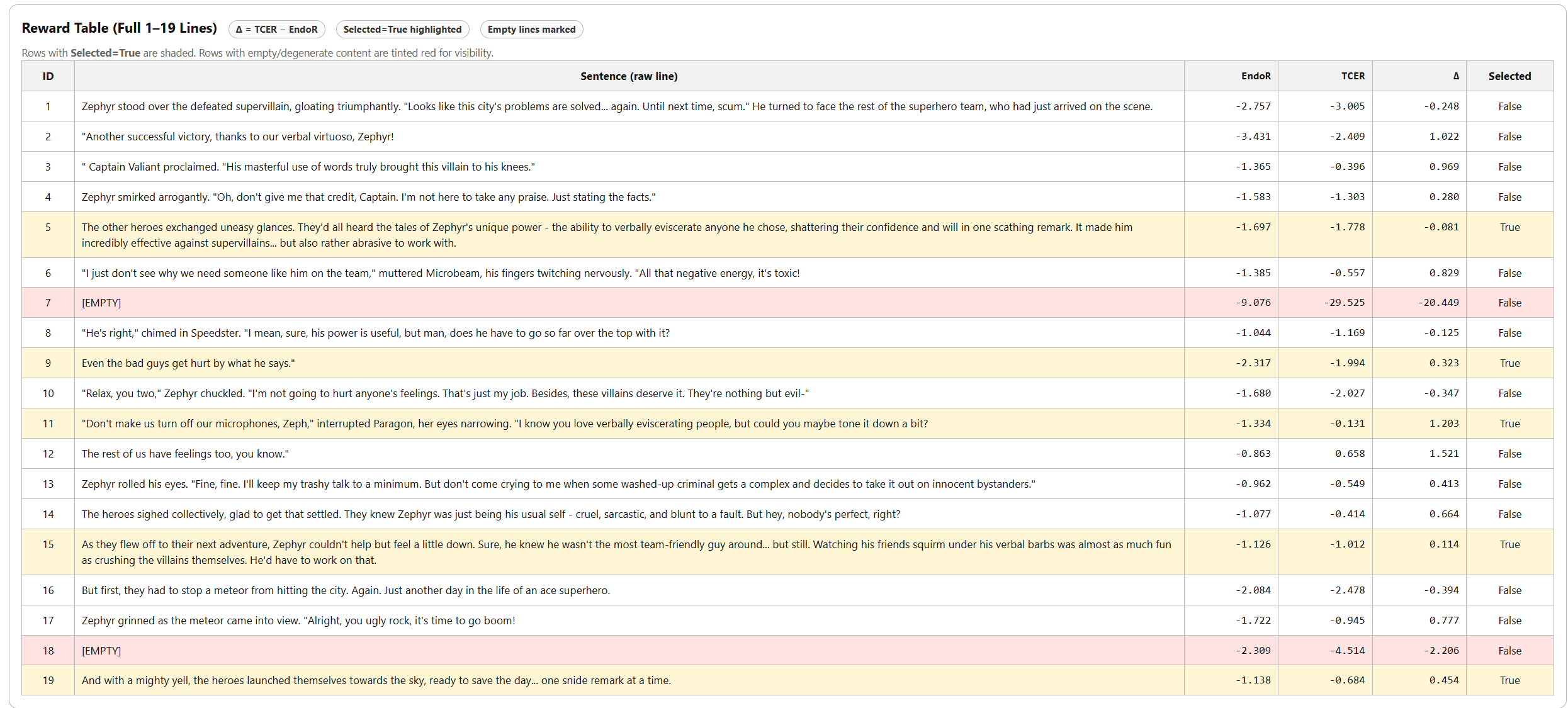}
    \caption{English reward case study: sentence-level EndoR vs.\ TCER rewards. The table reports $\Delta=\text{TCER}-\text{EndoR}$ and whether each line is selected by the judge panel.}
    \label{fig:reward_case_en_table}
\end{figure*}

% -------- Writing Case: SFT --------
\begin{figure*}[p]
    \centering
    \includegraphics[width=0.95\textwidth]{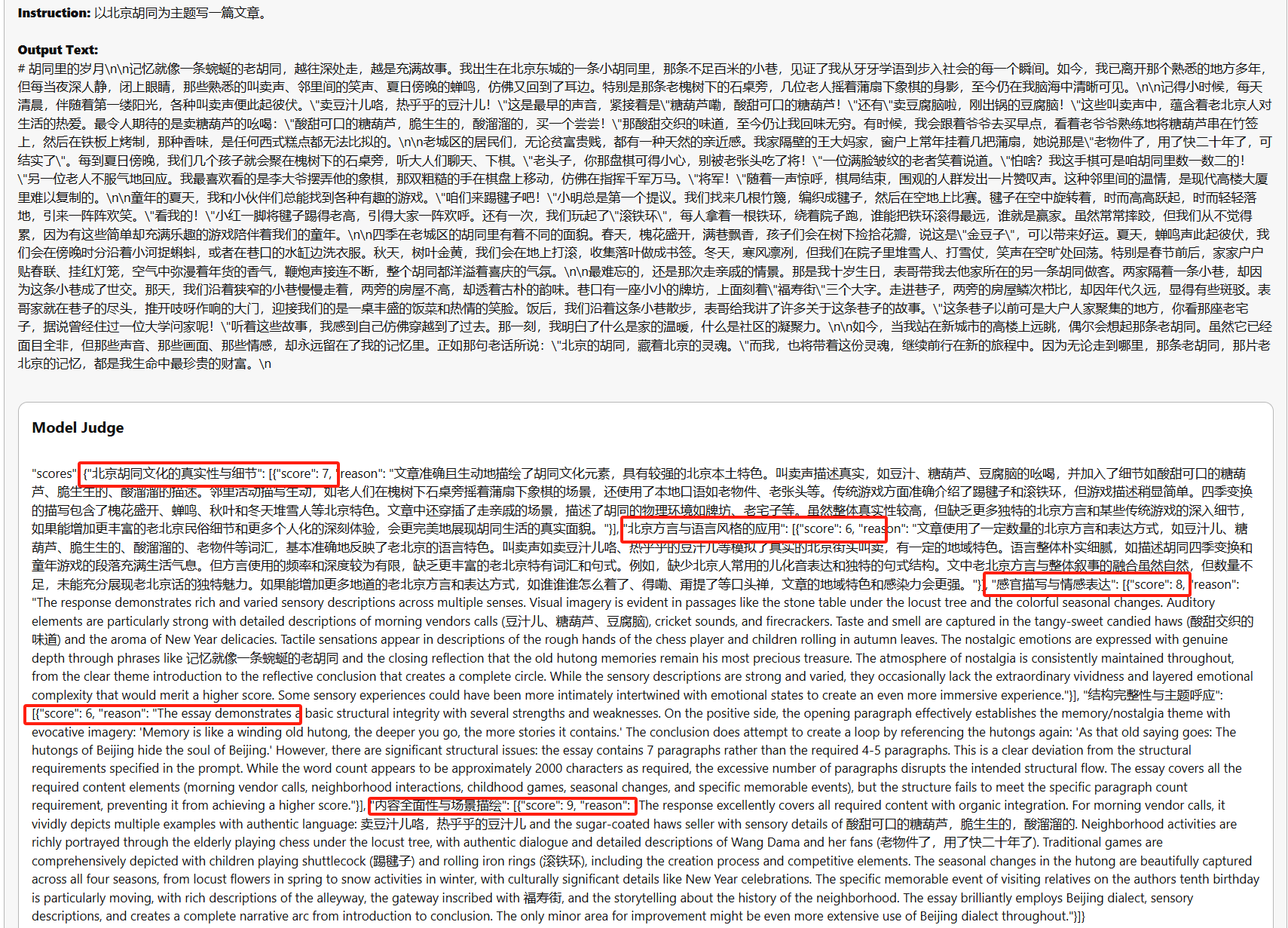}
    \caption{Writing case (SFT). Response generated by the SFT-only model.}
    \label{fig:writing_case_sft}
\end{figure*}

% -------- Writing Case: EndoR --------
\begin{figure*}[p]
    \centering
    \includegraphics[width=0.95\textwidth]{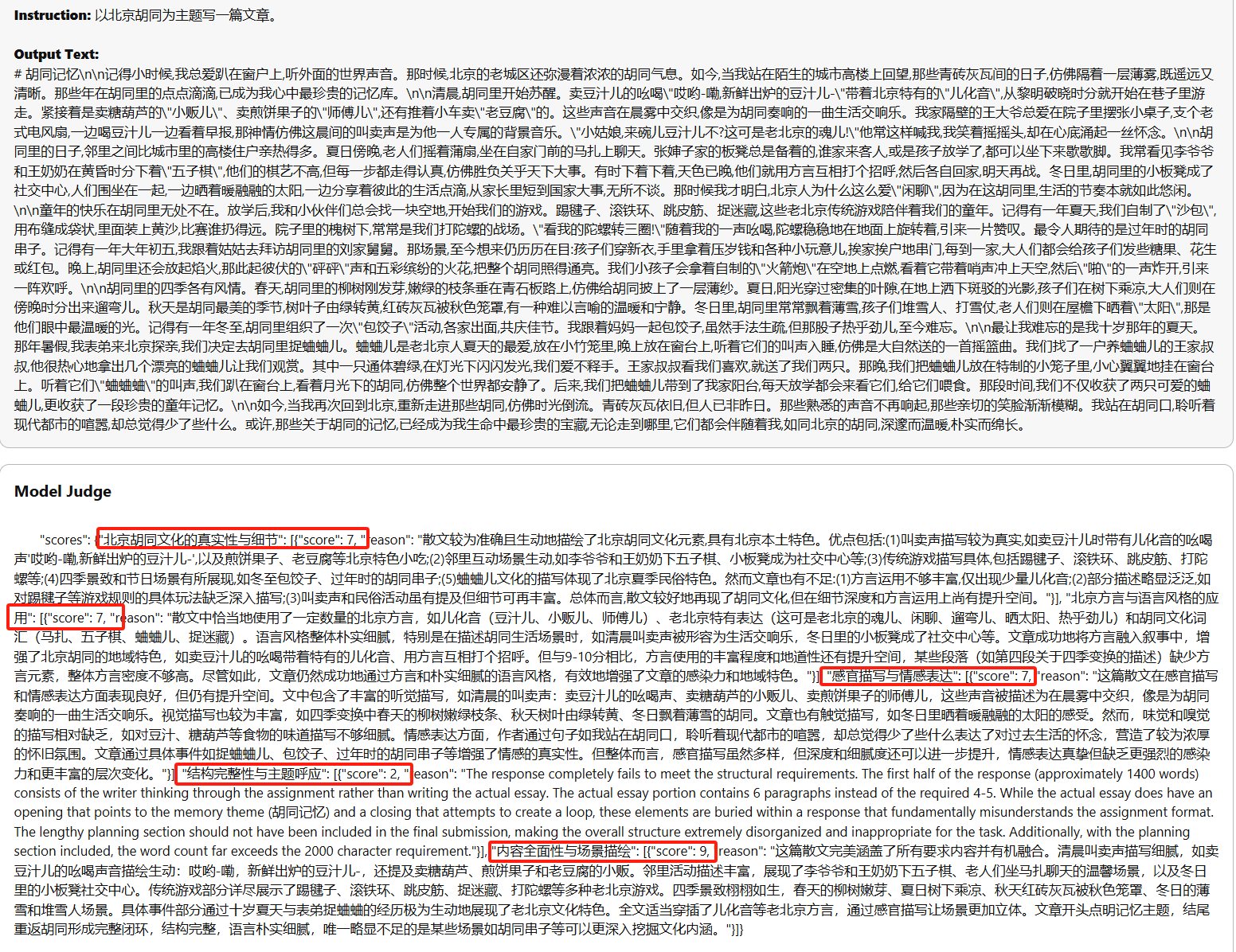}
    \caption{Writing case (EndoR). Response generated by the EndoR-trained model under the same prompt.}
    \label{fig:writing_case_endor}
\end{figure*}

% -------- Writing Case: TCER --------
\begin{figure*}[p]
    \centering
    \includegraphics[width=0.95\textwidth]{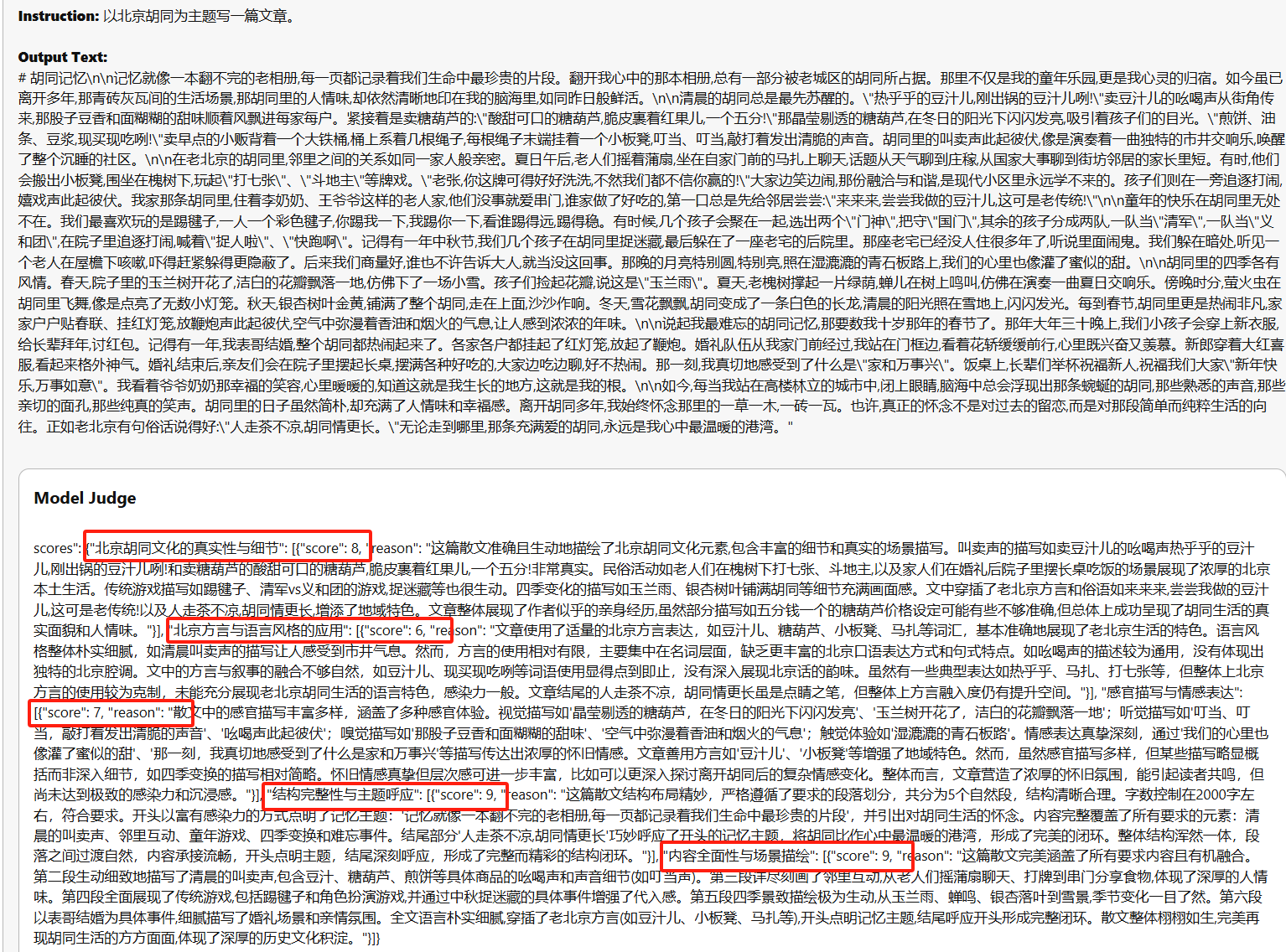}
    \caption{Writing case (TCER). Response generated by the TCER-trained model under the same prompt.}
    \label{fig:writing_case_tcer}
\end{figure*}

% ===================== Writing Case (English) =====================

\begin{figure*}[p]
    \centering
    \includegraphics[width=0.95\textwidth]{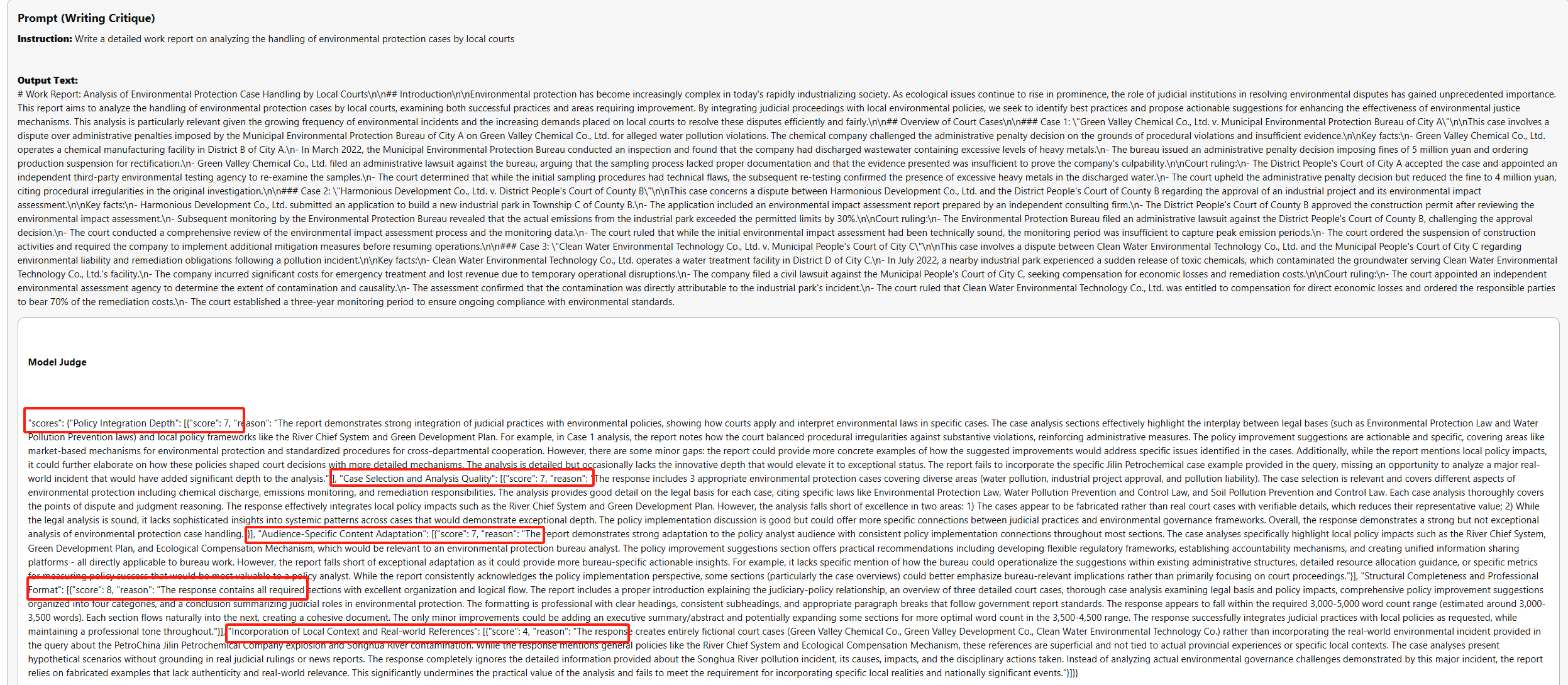}
    \caption{English writing case (SFT). Response generated by the SFT-only model.}
    \label{fig:writing_case_en_sft}
\end{figure*}

\begin{figure*}[p]
    \centering
    \includegraphics[width=0.95\textwidth]{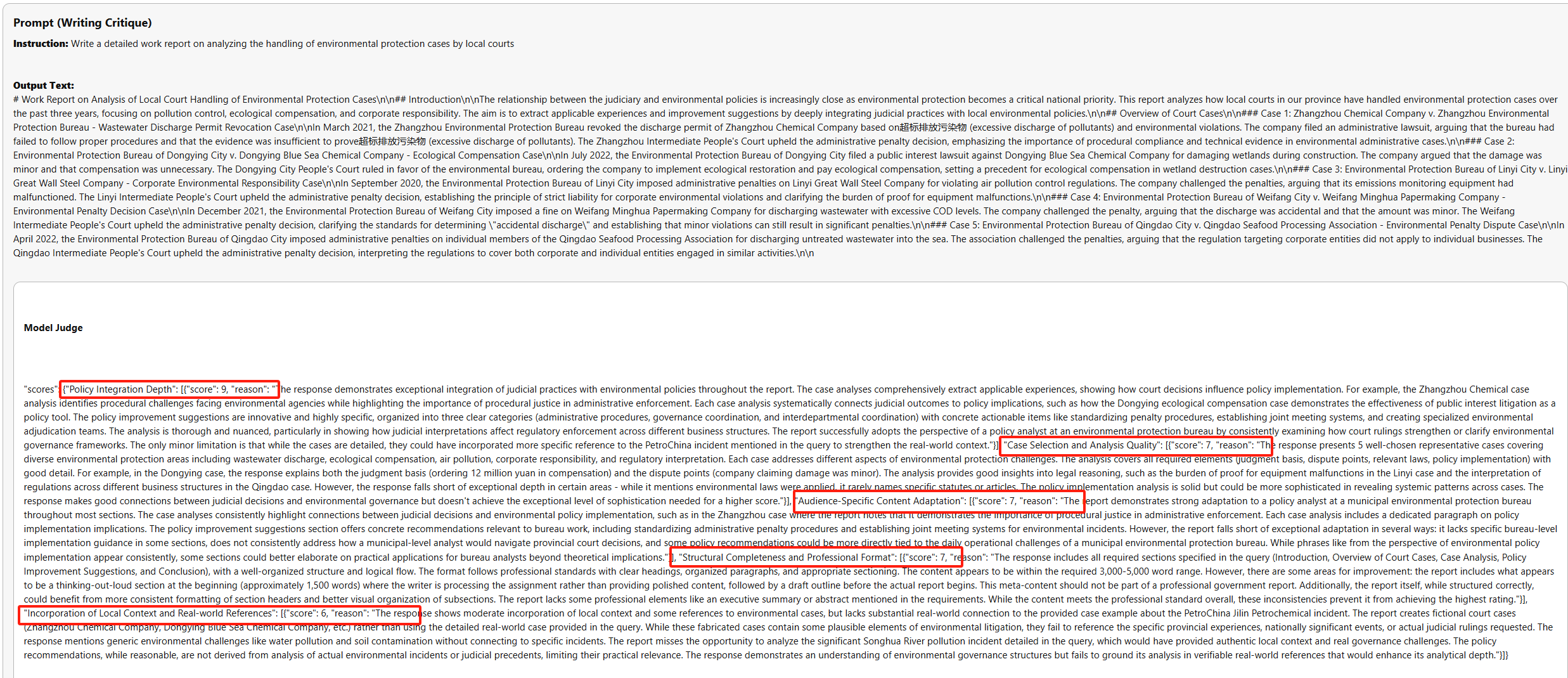}
    \caption{English writing case (EndoR). Response generated by the EndoR-trained model under the same prompt.}
    \label{fig:writing_case_en_endor}
\end{figure*}

\begin{figure*}[p]
    \centering
    \includegraphics[width=0.95\textwidth]{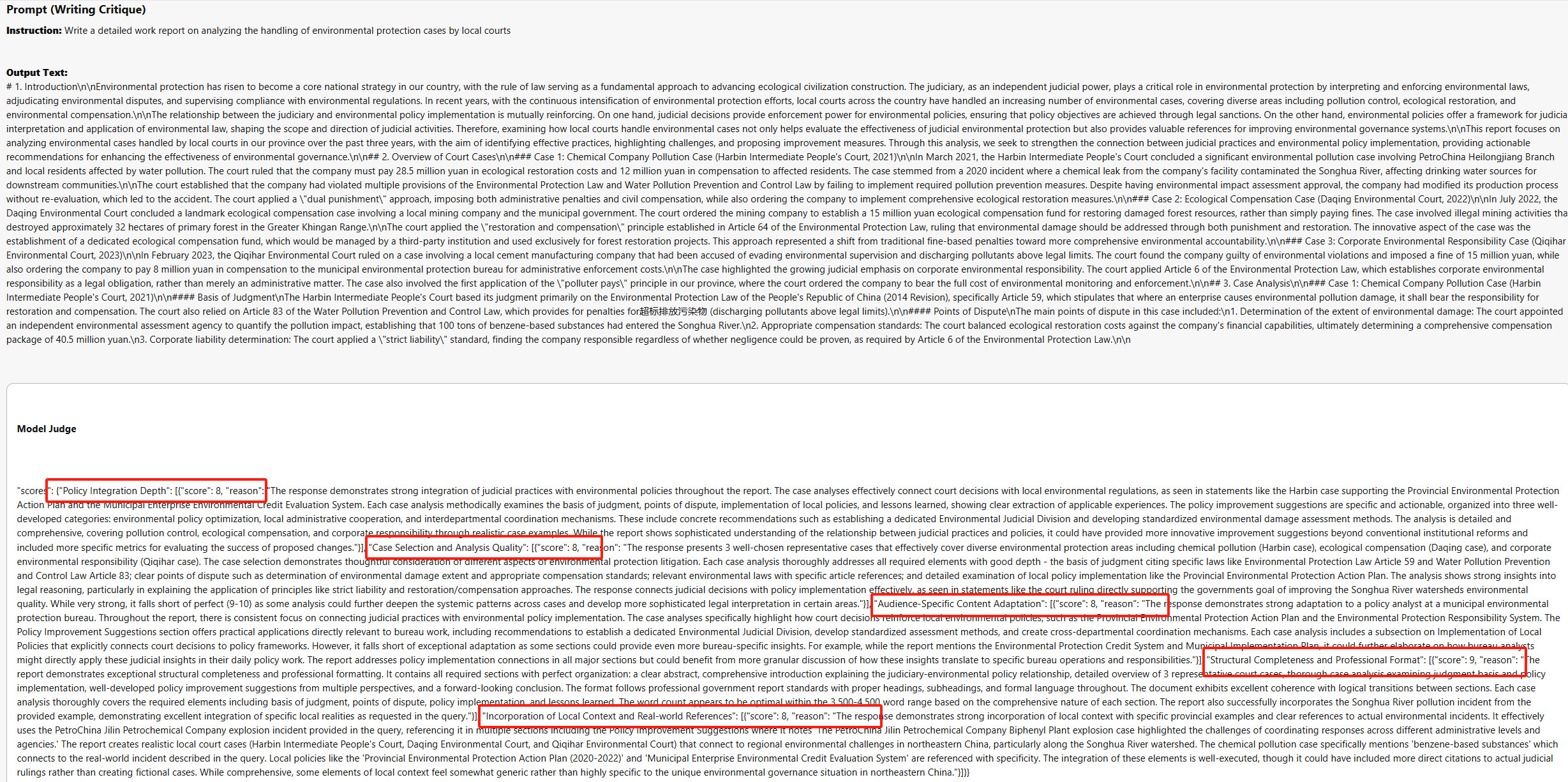}
    \caption{English writing case (TCER). Response generated by the TCER-trained model under the same prompt.}
    \label{fig:writing_case_en_tcer}
\end{figure*}

% -------- Prompt Case: Selection Prompt --------
\begin{figure*}[p]
    \centering
    \includegraphics[width=0.8\textwidth]{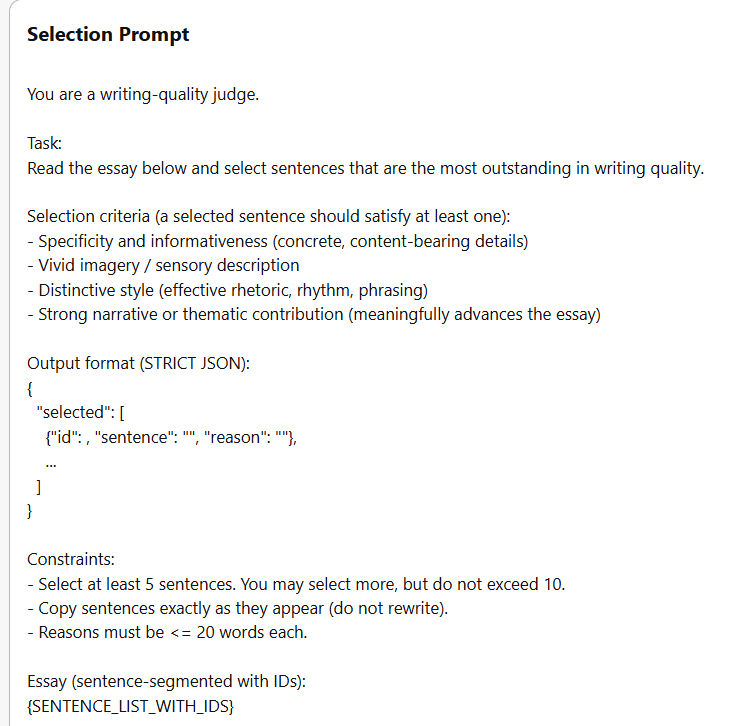}
    \caption{Sentence-selection prompt used by individual judge models (GPT-4o, Claude Opus 4, and Gemini 2.5 Pro). Each judge independently selects at least five highlighted sentences from the same output.}
    \label{fig:prompt_case_selection}
\end{figure*}

% -------- Prompt Case: Aggregator Prompt --------
\begin{figure*}[p]
    \centering
    \includegraphics[width=0.8\textwidth]{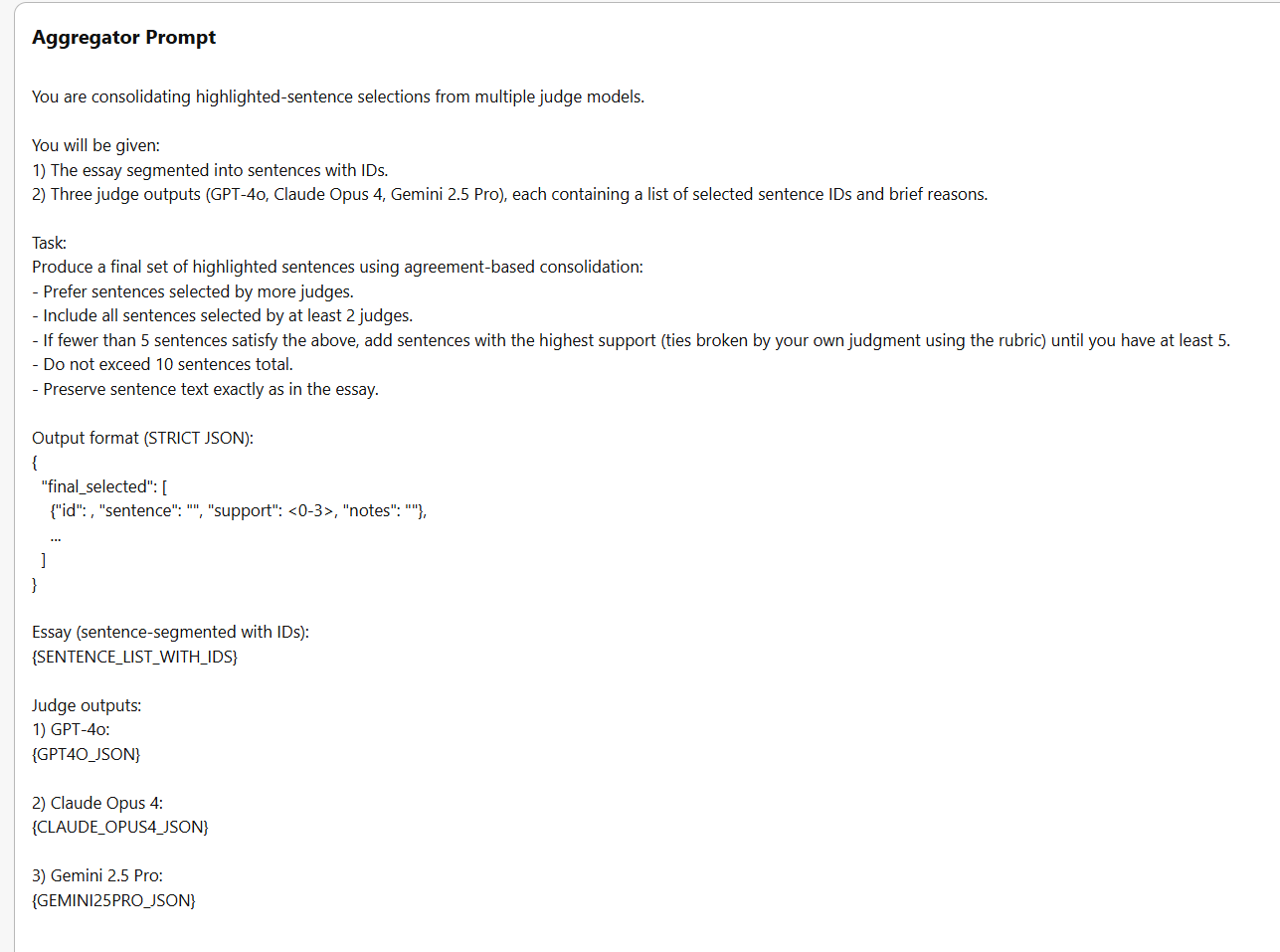}
    \caption{Aggregation prompt used by Gemini 2.5 Pro to consolidate highlighted sentences across judges via agreement-based selection.}
    \label{fig:prompt_case_aggregator}
\end{figure*}

\end{document}